\newtheorem{definition}{Definition}
\newtheorem{lemma}{Lemma}
\newtheorem{proposition}{Proposition}
\newcommand{\R}{\mathbb{R}}
\newcommand{\N}{\mathbb{N}}
\newcommand{\mcA}{\mathcal{A}}
\newcommand{\mcC}{\mathcal{C}}
\newcommand{\mcO}{\mathcal{O}}
\newcommand{\mfC}{\mathfrak{C}}
\newcommand{\quotientsymbol}{\Lambda} % In case we want to change later
\DeclareMathOperator*{\argmax}{argmax}
\DeclareMathOperator*{\argmin}{argmin}
\renewcommand{\arraystretch}{1.1} % Adjust vertical padding (default is 1)
\begin{document}

\begin{frontmatter}

% \title{Clustering quality evaluation: An upper bound of the Silhouette}
% \title{An upper bound on the silhouette coefficient for cluster evaluation}
% \title{A new internal clustering validation metric: an upper bound on silhouette coefficient}
\title{An upper bound of the silhouette validation metric for clustering}

\author[a]{ Hugo Sträng \corref{cor1}}
\author[b]{Tai Dinh\corref{cor1}}

\cortext[cor1]{Corresponding author: Hugo Sträng (hugo.strang2@gmail.com), Tai Dinh (t\_dinh@kcg.edu)}

\affiliation[a]{organization={Independent researcher},
            addressline={Stockholm},
            country={Sweden}}
\affiliation[b]{organization={The Kyoto College of Graduate Studies for Informatics},
            addressline={7 Tanaka Monzencho, Sakyo Ward}, 
            city={Kyoto City},
            state={Kyoto},
            country={Japan}}

\begin{abstract}
The \emph{silhouette coefficient} summarizes, per observation, cohesion versus separation in $[-1,1]$; the average silhouette width (\(\mathrm{ASW}\)) is a common internal measure of clustering quality where higher values indicate more coveted results. However, the dataset-specific maximum of \(\mathrm{ASW}\) is typically unknown, and the standard upper limit \(1\) is often unattainable. In this work, we derive for each data point in a given dataset a sharp upper bound on its silhouette width. By aggregating these individual bounds, we present a canonical data-dependent upper bound on \(\mathrm{ASW}\) that often assumes values well below $1$. The presented bounds can indicate whether individual data points can ever be well placed, enable early stopping of silhouette-based optimization loops, and help answer a key question: How close is my clustering result to the best possible outcome on this specific data? Across synthetic and real datasets, the bounds are provably near-tight in many cases and offer significant enrichment of cluster quality evaluation.  
\end{abstract}

\begin{keyword}
Data mining \sep cluster analysis \sep internal evaluation metric \sep silhouette score \sep average silhouette width \sep upper bound
\end{keyword}
 
\end{frontmatter}

% \linenumbers

\section{Introduction} \label{sec:introduction}
Cluster analysis is a fundamental tool in data science for discovering structure in unlabeled data \citep{aggarwal2013introduction}. Its applications span the natural and social sciences, engineering, education, economics, and the health sciences \citep{dinh2025data}. Because ground-truth labels are rarely available, evaluating clustering quality is challenging, so researchers rely on internal validation indices that summarize within-cluster compactness and between-cluster separation \citep{jain1999data}.

Among cluster validation indices, the silhouette—also known as the silhouette coefficient, silhouette score, or silhouette index—is one of the most widely used measures for assessing how well each point fits within its assigned cluster relative to the nearest neighboring cluster \citep{ogsilhouette}. The average of this measure across all data points is often referred to as the average silhouette width $\mathrm{ASW}$. $\mathrm{ASW}$ values close to 1 signal well‑separated, compact clusters; values near 0 imply overlapping clusters; and negative values indicate that points are likely misassigned. Extensive work has examined and refined $\mathrm{ASW}$. \cite{batool2021clustering} proved that it satisfies key axioms for clustering quality measures such as scale invariance and consistency. More recently, \cite{john} proposed alternative aggregation strategies that improve $\mathrm{ASW}$’s robustness to cluster-size imbalance.

Although $\mathrm{ASW}$ is widely used, its raw values are difficult to interpret across heterogeneous datasets. Intrinsic data characteristics can impose strict ceilings on attainable $\mathrm{ASW}$ scores; for example, when clusters overlap or deviate from convex shapes, even the best partition may yield a low $\mathrm{ASW}$. Directly comparing $\mathrm{ASW}$ values between such datasets can therefore lead to misguided conclusions.

To address this issue, we derive a data-dependent upper bound on the $\mathrm{ASW}$. This bound supplies essential context: if analysis shows that no partition can exceed, say, an $\mathrm{ASW}$ of 0.30, an empirical result of 0.29 should be regarded as essentially optimal. Such insight prevents wasted effort in pursuit of negligible gains and enables more informed evaluation of clustering quality.

A further obstacle is the combinatorial explosion of the partition space. Even for moderately sized datasets, the number of possible clusterings is astronomical, rendering optimization through exhaustive search with respect to any internal criterion, including the silhouette index, computationally prohibitive. \cite{batool2021clustering} addressed this limitation by proposing two heuristic algorithms that search for local maxima of $\mathrm{ASW}$, noting that global optimization is tractable only for the smallest datasets. Our contribution complements their approach by providing a principled and data-aware benchmark, namely a theoretical bound, against which the quality of such locally optimal solutions can be evaluated.

% -----------------------------------------------------------------------------
\begin{figure*}[!htb]
\vspace{-2cm}
\centering
\includegraphics[width=0.7\linewidth]{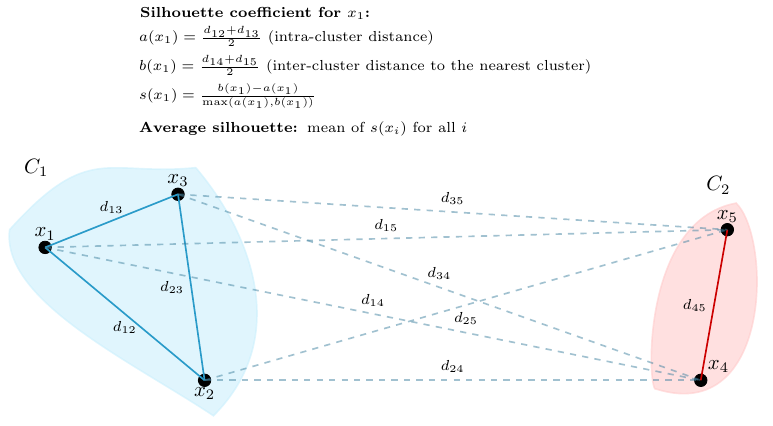}
\caption{Illustration of the silhouette coefficient calculation for \(x_1\), showing intra-cluster distances within cluster \(C_1\) and inter-cluster distances to cluster \(C_2\).}
\label{fig:silhouette_origin}
\end{figure*}
% -----------------------------------------------------------------------------

In general, the major contributions of this paper are as follows: 
\begin{itemize}
    \item [-] We introduce a novel data–dependent upper bound on the $\mathrm{ASW}$ to yield a \emph{global} ceiling that no clustering of the given dissimilarity matrix can surpass. We also derive for each data point an upper bound on its silhouette width. These \emph{per‑point} bounds have the potential to highlight observations that cannot be well‑placed in any clustering. The bounds are computed by an \(O(n^{2}\!\log n)\) algorithm that is practical for medium– to large‑scale datasets. In some scenarios, when combined with empirical results, the proposed $\mathrm{ASW}$ upper bound can serve as proof that a silhouette‑optimal solution has actually been found.
    \item[-] We evaluate the proposed upper bound on large-scale synthetic datasets and a diverse suite of real-world datasets across multiple dissimilarity measures and clustering baselines. The bound is tight or near-tight in many cases, diagnoses weak clusterability when no inherent group structure exists, tightens under minimum cluster-size constraints, and enables early stopping in silhouette-based model selection.
    \item [-] All datasets, preprocessing scripts, bound‑computation routines, and experiment notebooks are publicly released in a GitHub repository\footnote{\url{https://github.com/hugo-strang/silhouette-upper-bound}} and PyPI\footnote{\url{https://pypi.org/project/silhouette-upper-bound/}} to ensure full reproducibility and to encourage further research and practical adoption.
\end{itemize}

The rest of this paper is organized as follows. Section \ref{sec:related_work} reviews related works. Section \ref{sec:preliminaries} introduces preliminaries. Section \ref{sec:proposed_method} details the proposed method. Section \ref{sec:experiments} presents the experimental results. Finally, section \ref{sec:conclusion} summarizes the findings and outlines directions for future research.

% -------------------------------------------------
\section{Related work} \label{sec:related_work}
\tikzset{
  level 1/.style       = {sibling angle=50, level distance=10cm},
  every node/.style    = {inner sep=7pt},
  edge from parent/.style = {draw, -, out=-90, in=90},
  % ---------- per‑category visual presets ----------
  energy_power/.style      = {level 2/.style={sibling angle=15,  level distance=15cm}, every node/.style={font=\huge}},
  transport_mobility/.style= {level 2/.style={sibling angle=15,  level distance=14cm}, every node/.style={font=\huge}},
  biomedical_health/.style = {level 2/.style={sibling angle=15,  level distance=15cm}, every node/.style={font=\huge}},
  environment_earth/.style = {level 2/.style={sibling angle=8,  level distance=15cm}, every node/.style={font=\huge}},
  materials_chem/.style    = {level 2/.style={sibling angle=8,  level distance=13cm}, every node/.style={font=\huge}},
  food_agri/.style         = {level 2/.style={sibling angle=17,  level distance=10cm}, every node/.style={font=\huge}},
  social_econ/.style       = {level 2/.style={sibling angle=15,  level distance=13cm}, every node/.style={font=\huge}},
  bigdata_docs/.style      = {level 2/.style={sibling angle=25,  level distance=12cm}, every node/.style={font=\huge}},
  sensor_iot/.style        = {level 2/.style={sibling angle=18,  level distance=12cm}, every node/.style={font=\huge}},
  cluster_validity/.style  = {level 2/.style={sibling angle=12,  level distance=14cm}, every node/.style={font=\huge}},
  algorithm_dev/.style     = {level 2/.style={sibling angle=8,  level distance=15cm}, every node/.style={font=\huge}},
  feature_dimred/.style    = {level 2/.style={sibling angle=20,  level distance=12cm}, every node/.style={font=\huge}},
  % ← no trailing comma here
}
% ------------------------------------------------------------------
%  TREE‑RING TAXONOMY FIGURE
% ------------------------------------------------------------------
\begin{figure*}[!htb]
\vspace{-2cm}
\centering
\begin{adjustbox}{max width=1\linewidth}
\begin{tikzpicture}[grow cyclic,shape=rectangle,scale=1]
\node[draw,minimum size=12pt]{\fontsize{50}{60}\selectfont Silhouette coefficient}
% 1) Energy & Power Systems
child[level distance=27cm,style=energy_power]{node[draw,minimum size=8pt]{\fontsize{30}{30}\selectfont Energy \& Power Systems}
    child{node{{\cite{ouyang2024k}}}}
    child{node{{\cite{rodriguez2024novel}}}}
    child{node{{\cite{zheng2025practical}}}}
    child{node{{\cite{duarte2021increasing}}}}
    child{node{{\cite{eskandarnia2022embedded}}}}
    child{node{{\cite{zhao2023generalized}}}}
    child[level distance=17cm]{node{{\cite{koutsandreas2024harnessing}}}}
    child[level distance=16cm]{node{{\cite{zhou2025recognition}}}}
    child[sibling angle=14]{node{{\cite{hu2023transformer}}}}
    child{node{{\cite{alrasheed2024building}}}}
    child{node{{\cite{diaz2022typologies}}}}
}
% 2) Transportation & Mobility
child[sibling angle=51, level distance=35cm,style=transport_mobility]{node[draw,minimum size=8pt]{\fontsize{30}{30}\selectfont Transportation \& Mobility}
    child{node{{\cite{mohd2024passengers}}}}
    child[level distance=15cm]{node{{\cite{wang2021analyzing}}}}
    child[level distance=13cm]{node{{\cite{bai2023adaptive}}}}
    child[level distance=16cm]{node{{\cite{sheng2025multi}}}}
    child{node{{\cite{guo2025adaptive}}}}
    child{node{{\cite{passarella2024anomaly}}}}
    child{node{{\cite{ma2025unravelling}}}}
    child{node{{\cite{li2025clustering}}}}
    child{node{{\cite{ansari2021spatiotemporal}}}}
}
% 3) Biomedical & Health
child[sibling angle=54,level distance=34cm,style=biomedical_health]{node[draw,minimum size=8pt]{\fontsize{30}{30}\selectfont Biomedical \& Health}
    child{node{{\cite{jamalpour2025high}}}}
    child{node{{\cite{sharafoddini2021identifying}}}}
    child{node{{\cite{ghaderi2023self}}}}
    child{node{{\cite{gill2024symptom}}}}
    child{node{{\cite{baier2024nocturnal}}}}
    child{node{{\cite{kim2022pathological}}}}
    child{node{{\cite{rustam2022data}}}}
    child{node{{\cite{jha2021apache}}}}
    child{node{{\cite{xian2025systematic}}}}
    child{node{{\cite{bhagat2022dpcf}}}}
}
% 6) Food & Agriculture
child[level distance=10cm,style=food_agri]{node[draw,minimum size=8pt]{\fontsize{30}{30}\selectfont Food \& Agriculture}
    child{node{{\cite{edris2024identifying}}}}
    child{node{{\cite{singh2025development}}}}
    child[sibling angle=25]{node{{\cite{gere2023recommendations}}}}
}
% 7) Social & Economic
child[sibling angle=25,level distance=12cm,style=social_econ]{node[draw,minimum size=8pt]{\fontsize{30}{30}\selectfont Social \& Economic}
    child{node{{\cite{owoo2025hierarchical}}}}
    child{node{{\cite{quintero2022analysis}}}}
    child{node{{\cite{gulma2022new}}}}
    child[level distance=15cm]{node{{\cite{zhang2025two}}}}
    child{node{{\cite{aonso2024cluster}}}}
}
% 5) Materials & Chemistry
child[sibling angle=10,level distance=15cm,style=materials_chem]{node[draw,minimum size=8pt]{\fontsize{30}{30}\selectfont Materials \& Chemistry}
    child{node{{\cite{pham2025clustering}}}}
    child{node{{\cite{bi2025comprehensive}}}}
    child{node{{\cite{sancho2022cluster}}}}
    child{node{{\cite{bagirov2023novel}}}}
    child{node{{\cite{consuegra101unsupervised}}}}
}
% 4) Environment & Earth Sciences
child[sibling angle=38, level distance=22cm,style=environment_earth]{node[draw,minimum size=8pt]{\fontsize{30}{30}\selectfont Environment \& Earth Sciences}
    child{node{{\cite{nourani2022robust}}}}
    child{node{{\cite{guo2023combining}}}}
    child{node{{\cite{yang2022zoning}}}}
    child{node{{\cite{huang2023dominant}}}}
    child{node{{\cite{balbi2024hierarchical}}}}
    child{node{{\cite{biabiany2020design}}}}
    child{node{{\cite{guan2025novel}}}}
    child{node{{\cite{gupta2025criteria}}}}
    child{node{{\cite{zhao2023comparison}}}}
    child{node{{\cite{deng2025on}}}}
}
% 8) Sensor Networks & IoT
child[sibling angle=35, level distance=15cm,style=sensor_iot]{node[draw,minimum size=8pt]{\fontsize{30}{30}\selectfont Sensor Networks \& IoT}
    child{node{{\cite{kotary2020distributed}}}}
    child{node{{\cite{kotary2021many}}}}
    child{node{{\cite{panwar2024distributed}}}}
    child{node{{\cite{panwar2025distributed}}}}
    child[level distance = 13cm]{node{{\cite{waidyanatha2023regularity}}}}
}
% 9) Big Data & Documents
child[sibling angle=43, level distance=11cm,style=bigdata_docs]{node[draw,minimum size=8pt]{\fontsize{30}{30}\selectfont Big Data}
    child[level distance=8cm]{node{{\cite{qaroush2025maximum}}}}
    child[level distance = 11cm]{node{{\cite{thirumoorthy2021hybrid}}}}
    child[level distance = 11cm]{node{{\cite{song2023novel}}}}
    child[level distance=7cm]{node{{\cite{chander2023dolphin}}}}
}
% 10) Cluster Validity & Selection
child[sibling angle=43,level distance=18cm,style=cluster_validity]{node[draw,minimum size=8pt]{\fontsize{30}{30}\selectfont Cluster Validity \& Selection}
    child{node{{\cite{khan2025standardizing}}}}
    child{node{{\cite{khan2025standardization}}}}
    child{node{{\cite{batool2021clustering}}}}
    child{node{{\cite{lenssen2024medoid}}}}
    child{node{{\cite{capo2023fast}}}}
    child{node{{\cite{gagolewski2021are}}}}
    child{node{{\cite{bagirov2023finding}}}}
}
% 11) Algorithm Development
child[sibling angle=45, level distance=18cm,style=algorithm_dev]{node[draw,minimum size=8pt]{\fontsize{30}{30}\selectfont Algorithm Development}
    child{node{{\cite{koren2025preprocessing}}}}
    child{node{{\cite{silva2025automatic}}}}
    child{node{{\cite{manochandar2020development}}}}
    child{node{{\cite{qin2025single}}}}
    child{node{{\cite{zhang2025two}}}}
    child{node{{\cite{bai2023adaptive}}}}
    child{node{{\cite{sheng2025multi}}}}
    child{node{{\cite{kotary2020distributed}}}}
    child{node{{\cite{kotary2021many}}}}
    child{node{{\cite{du2024multi}}}}
    child{node{{\cite{fu2025gsscl}}}}
    child{node{{\cite{gan2024discrimination}}}}
    child{node{{\cite{fu2023gc}}}}
    child{node{{\cite{enriquez2025performance}}}}
    child{node{{\cite{gere2023recommendations}}}}
}
% 12) Feature & Dim‑Reduction
child[sibling angle=50,level distance=12cm,style=feature_dimred]{node[draw,minimum size=8pt]{\fontsize{30}{30}\selectfont Feature \& Dim‑Reduction}
    child{node{{\cite{rustam2022data}}}}
    child{node{{\cite{dwivedi2025novel}}}}
    child[level distance=11cm]{node{{\cite{zhou2024ocae}}}}
    child{node{{\cite{chander2023dolphin}}}}
};
\end{tikzpicture}
\end{adjustbox}
\caption{Taxonomy of how the silhouette coefficient is employed across research fields.}
\label{fig:silhouette_classification}
\end{figure*}

The \emph{silhouette coefficient}~\citep{ogsilhouette} is a distance-based internal validity index that quantifies how well each object fits its assigned cluster relative to its nearest alternative. Fig.~\ref{fig:silhouette_origin} illustrates the construction. For a point \(x_i\in C_\ell\), we measure \emph{within-cluster cohesion} by \(a(x_i)\) (its average dissimilarity to members of \(C_\ell\)) and \emph{nearest-cluster separation} by \(b(x_i)\) (its average dissimilarity to the closest rival cluster); see Sec. ~\ref{sec:preliminaries} for formal definitions. These quantities yield an individual silhouette score \(s(x_i)\in[-1,1]\), which is large when separation dominates cohesion, near zero around decision boundaries, and negative when \(x_i\) is, on average, closer to another cluster than to its own. 

Aggregating over points gives the \emph{average silhouette width} (ASW), a scale-free summary of partition quality (Sec. ~\ref{sec:preliminaries}). In practice, ASW is often used for internal model selection by choosing the number of clusters \(K\) that maximizes it; rule-of-thumb interpretations treat larger ASW as evidence of well-separated, compact clusters, and values near zero or negative as signs of overlap or a poor choice of \(K\). Despite numerous extensions—e.g., to categorical data~\citep{dinh2019estimating}, fuzzy clustering~\citep{lasek2024employing}, hierarchical clustering~\citep{owoo2025hierarchical}, model-based clustering~\citep{guan2025novel}, spectral clustering~\citep{xian2025systematic}, spherical clustering~\citep{deng2025on}, and optimization-based methods~\citep{yang2024clustering}—the original silhouette framework remains a standard baseline for benchmarking clustering algorithms.

% ------------------------------------------------------
\section{Preliminaries}\label{sec:preliminaries}

\begin{figure}[!htb]
  \centering
  \begin{minipage}[!htb]{0.5\linewidth}
    \centering
    \captionof{table}{A toy dataset of five samples with two features.}
    \renewcommand{\arraystretch}{1}
    \begin{tabular}{c|cc}
      \textbf{Point} & $Att_1$ & $Att_2$ \\ \hline
      \(x_1\) & 1.0 & 2.0 \\
      \(x_2\) & 2.0 & 1.0 \\
      \(x_3\) & 1.5 & 2.5 \\
      \(x_4\) & 6.0 & 2.0 \\
      \(x_5\) & 6.0 & 3.0 \\
    \end{tabular}
    \label{tab:example_data}
  \end{minipage}%
  \begin{minipage}[!htb]{0.6\linewidth}
    \includegraphics[width=\linewidth]{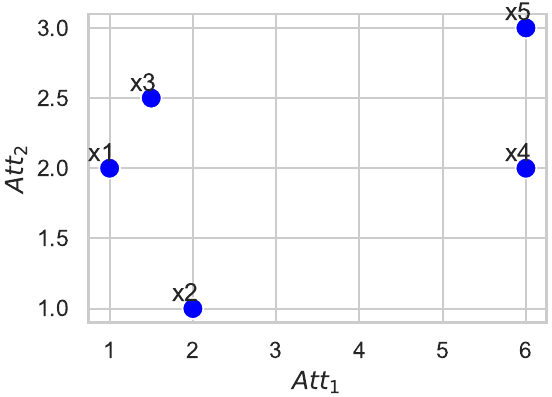}
    \caption{Scatter plot of the toy dataset.}
    \label{fig:toy_data}
  \end{minipage}
\end{figure}

We consider a dataset of $n$ observations described by $m$ features.
Throughout, we fix an integer $n \ge 4$ and index the observations by the set $\{1,\dots,n\}$. For example, Tab.~\ref{tab:example_data} and Fig.~\ref{fig:toy_data}  present a dataset consisting of five data points, each described by two features. We use this toy dataset for illustrative examples in this section.

When applicable, the samples $\{\mathbf{x}_i\}_{i=1}^n$ of a dataset can be stored in a matrix of the form
\[
  X \;=\; [x_{ij}]_{1 \le i \le n,\; 1 \le j \le m}\;\in\;\mathbb{R}^{n \times m},
  \qquad
  \mathbf{x}_i := (x_{i1},\dots,x_{im})^\top \in \mathbb{R}^m.
\]

We deal with datasets where there is a measure of dissimilarity (or distance) between each pair of data points. All pairwise dissimilarities are contained in the datasets dissimilarity matrix.

\begin{definition}[Dissimilarity matrix] \label{def:1}
  A matrix \(\Delta \in \mathbb{R}^{n \times n}\) is a \textbf{dissimilarity matrix} if
  \begin{enumerate}[label=(\alph*),nosep,leftmargin=*]
    \item \(\Delta_{ij} \ge 0\) for all \(i,j\) ($\Delta$ is non‑negative),
    \item \(\Delta_{ii}=0\) for all \(i\) ($\Delta$ has zero diagonal), and
    \item \(\Delta_{ij}=\Delta_{ji}\) for all \(i,j\) ($\Delta$ is symmetric).
  \end{enumerate}
\end{definition}
The triangle inequality is \emph{not} required.  We additionally exclude matrices containing a row of all zeros.

\begin{figure}[!htb]
% \vspace{-2cm}
  \centering
  \includegraphics[width=\linewidth]{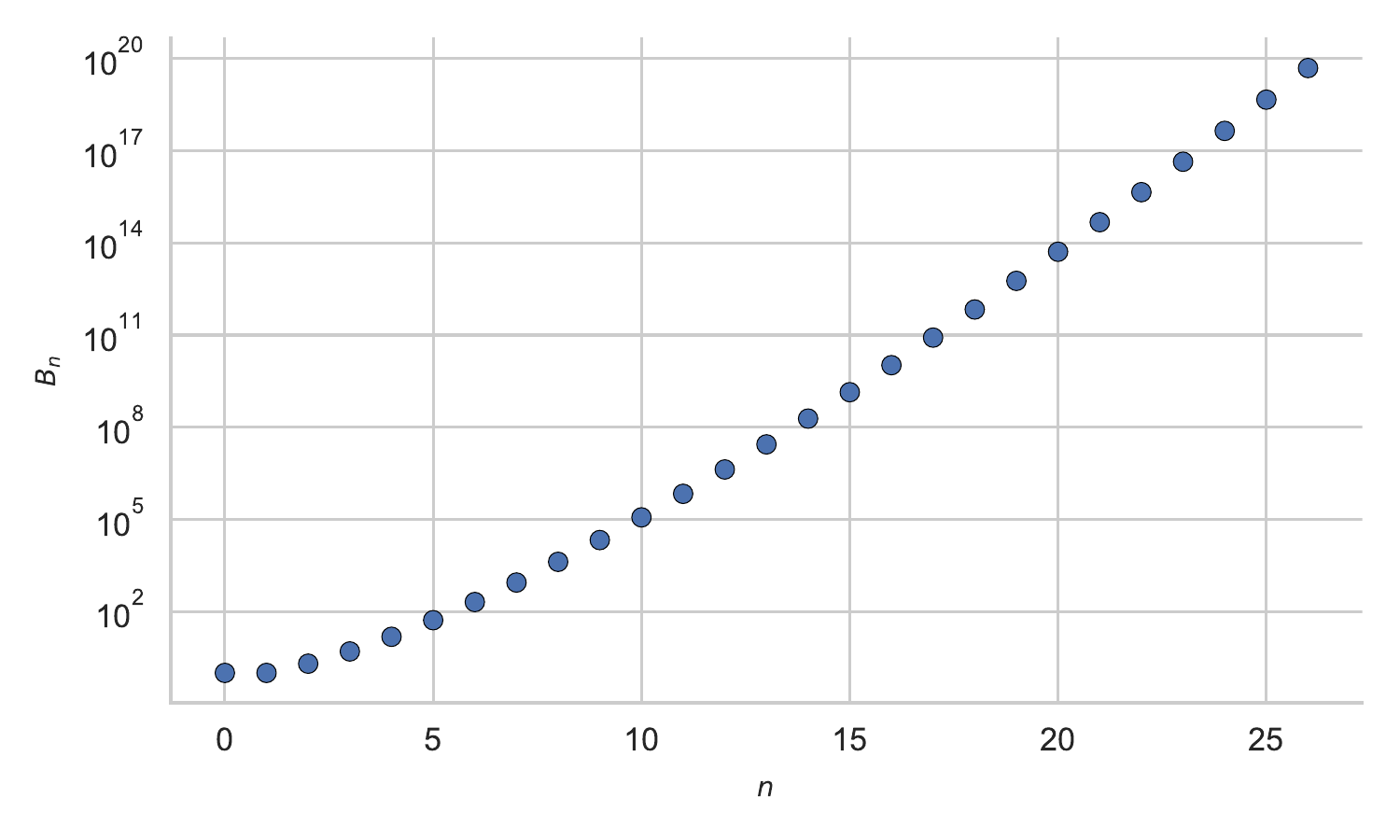}
  \caption{The first 27 Bell numbers, which enumerate the partitions of an $n$-element set. This rapid growth highlights the combinatorial explosion of clustering possibilities as $n$ increases.}
  \label{fig:bell_plot}
\end{figure}

For instance, if the Euclidean distance is used, then \(\Delta_{ij}= \lVert \mathbf{x}_i-\mathbf{x}_j\rVert_2\). The dissimilarity matrix corresponding to the dataset shown in Fig.~\ref{fig:toy_data} is:
\[
\Delta_{ij}= \lVert\mathbf{x}_i-\mathbf{x}_j\rVert_2,
\qquad
\Delta
=
\begin{pmatrix}
0 & 1.414 & 0.707 & 5.000 & 5.099 \\
1.414 & 0 & 1.581 & 4.123 & 4.472 \\
0.707 & 1.581 & 0 & 4.528 & 4.528 \\
5.000 & 4.123 & 4.528 & 0 & 1.000 \\
5.099 & 4.472 & 4.528 & 1.000 & 0
\end{pmatrix}
\]

We remark that the upper bound proposed in Sec.~\ref{sec:proposed_method} depends only on the dissimilarity matrix of the dataset; no observation vectors are required. Hence, it is just as data-agnostic as the $\mathrm{ASW}$ itself.

For our purposes, it will be convenient to work with the off-diagonal entries of $\Delta$, sorted row-wise in ascending order. We denote this transformed dissimilarity matrix by $\hat{\Delta}$ ($\hat{\Delta}\in\R^{n\times n-1}$). In other words, we have for every $i$ $\{\Delta_{ij}\}_{j=1}^n\setminus\{\Delta_{ii}\}=\{\hat{\Delta}_{ij}\}_{j=1}^{n-1}$ and $\hat{\Delta}_{i1}\leq \hat{\Delta}_{i2}\leq \cdots \leq \hat{\Delta}_{i(n-1)}$.
Constructing \(\hat{\Delta}\) requires sorting each of the \(n\) rows of $\Delta$, which costs \(\mathcal{O}(n^2\log n)\) time in total.

The sorted off-diagonal dissimilarity matrix for the toy dataset is shown in Tab.~\ref{tab:hat_delta}.
\begin{table}[!htb]
  \centering
  \caption{Row-wise sorted off-diagonal distances \(\hat{\Delta}\) corresponding to the toy dataset.}
  \renewcommand{\arraystretch}{1.2}
  \begin{tabular}{c|cccc}
    \textbf{Point} & \(\hat{\Delta}_{i1}\) & \(\hat{\Delta}_{i2}\) & \(\hat{\Delta}_{i3}\) & \(\hat{\Delta}_{i4}\) \\ \hline
    \(x_1\) & 0.707 & 1.414 & 5.000 & 5.099 \\
    \(x_2\) & 1.414 & 1.581 & 4.123 & 4.472 \\
    \(x_3\) & 0.707 & 1.581 & 4.528 & 4.528 \\
    \(x_4\) & 1.000 & 4.123 & 4.528 & 5.000 \\
    \(x_5\) & 1.000 & 4.472 & 4.528 & 5.099 \\
  \end{tabular}
  \label{tab:hat_delta}
\end{table}

Now, let us formally define the concept of a clustering of the dataset $\{1,\ldots,n\}$.

\begin{definition}[Clustering]
Let $K\in\{2,\ldots,n\}$. A \textbf{$\mathbf{K}$-clustering} of $\{i\}_{i=1}^n$ is a set $\mcC_K=\{C_I\}_{I=1}^K$ such that $C_I\subsetneq \{i\}_{i=1}^n$ for any $I$, and
\[
\begin{cases}
\bigcup_{I=1}^{K} C_I = \{1,\dots,n\}, & \text{(i) coverage} \\
C_I \cap C_J = \varnothing \textnormal{ for } I \ne J, & \text{(ii) disjointness} \\
C_I \ne \varnothing \textnormal{ for all } I \in \{1,\ldots,K\} & \text{(iii) non-emptiness}
\end{cases}
\]
\end{definition}
The total number of clusterings of an $n$-element set grows according to the Bell numbers (Fig.~\ref{fig:bell_plot}), illustrating the combinatorial complexity of the clustering problem.

A popular clustering method is the K-means algorithm, which we will use in Sec.~\ref{sec:experiments}. Its objective is to minimize the within-cluster sum of squares (or variance).
\begin{definition}[$K$-means \citep{lloyd1982least}]
When the desired number of clusters $K$ is specified, the classical \textbf{$\mathbf{K}$-means} problem seeks
\[
  \argmin_{\mcC_K}
  \sum_{I=1}^{K}\;
  \sum_{i\in C_I}
  \bigl\lVert\mathbf{x}_i-\boldsymbol{\mu}_I\bigr\rVert_2^2,
  \qquad
  \boldsymbol{\mu}_I \coloneqq \frac{1}{|C_I|}\sum_{i\in C_I}\mathbf{x}_i .
\]
\end{definition}

\begin{figure}[!htb]
  \centering
  \includegraphics[width=\linewidth]{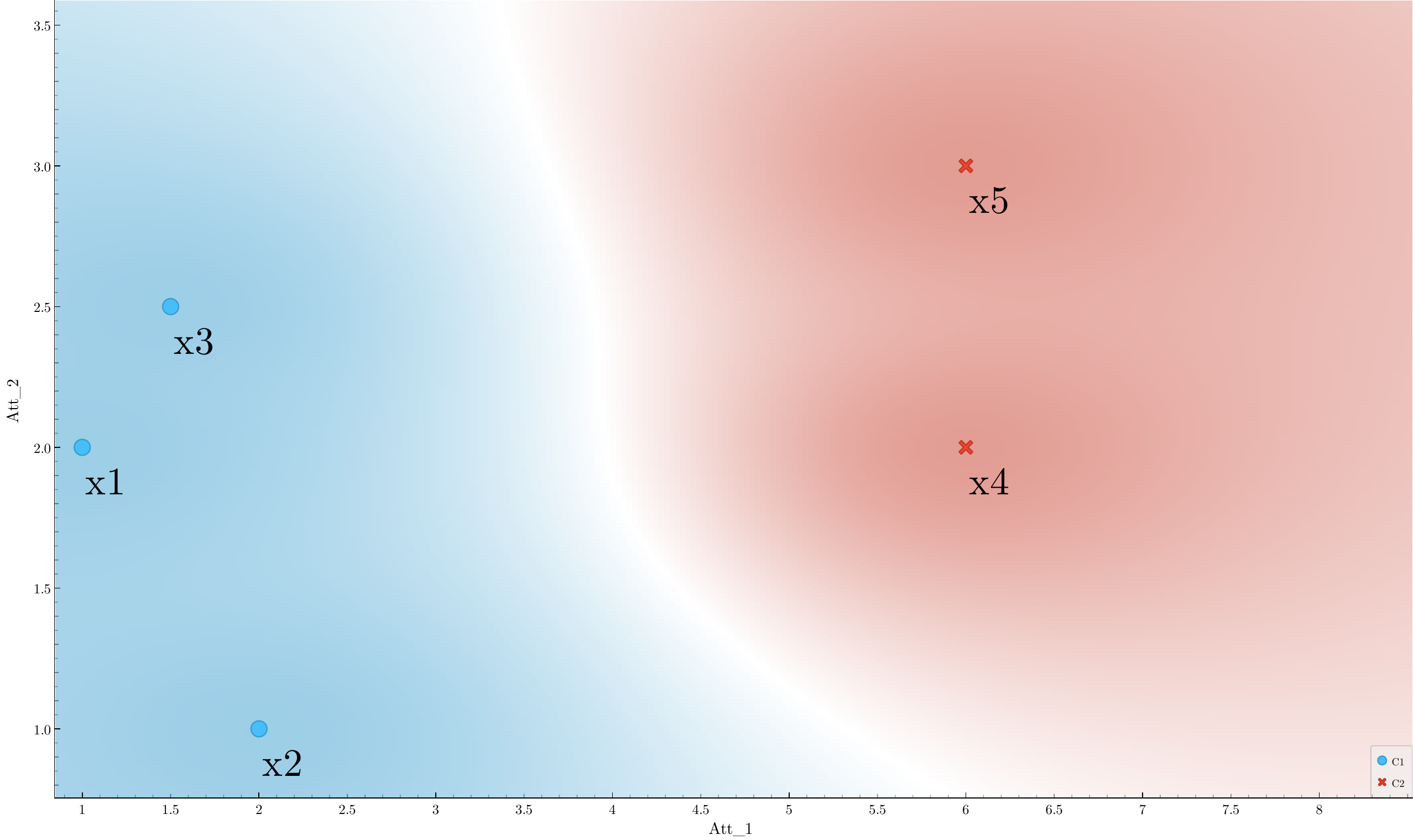}
  \caption{Clustering results using $K$-means on the dataset in Fig.~\ref{fig:toy_data}.}
  \label{fig:clust_results}
\end{figure}

For instance, Fig.~\ref{fig:clust_results} illustrates the result of applying $K$-means clustering to the toy dataset with $K=2$ clusters.

Alternative objectives such as linkage‑based agglomerative clustering may be formulated directly in terms of $\Delta$, but the underlying aim remains to form internally cohesive and externally separated groups. For instance, Fig.\ref{fig:heatmap_clust} shows a dendrogram obtained from performing hierarchical clustering on the dissimilarity matrix corresponding to the toy dataset.

\begin{figure}[!htb]
  \centering
  \includegraphics[width=\linewidth]{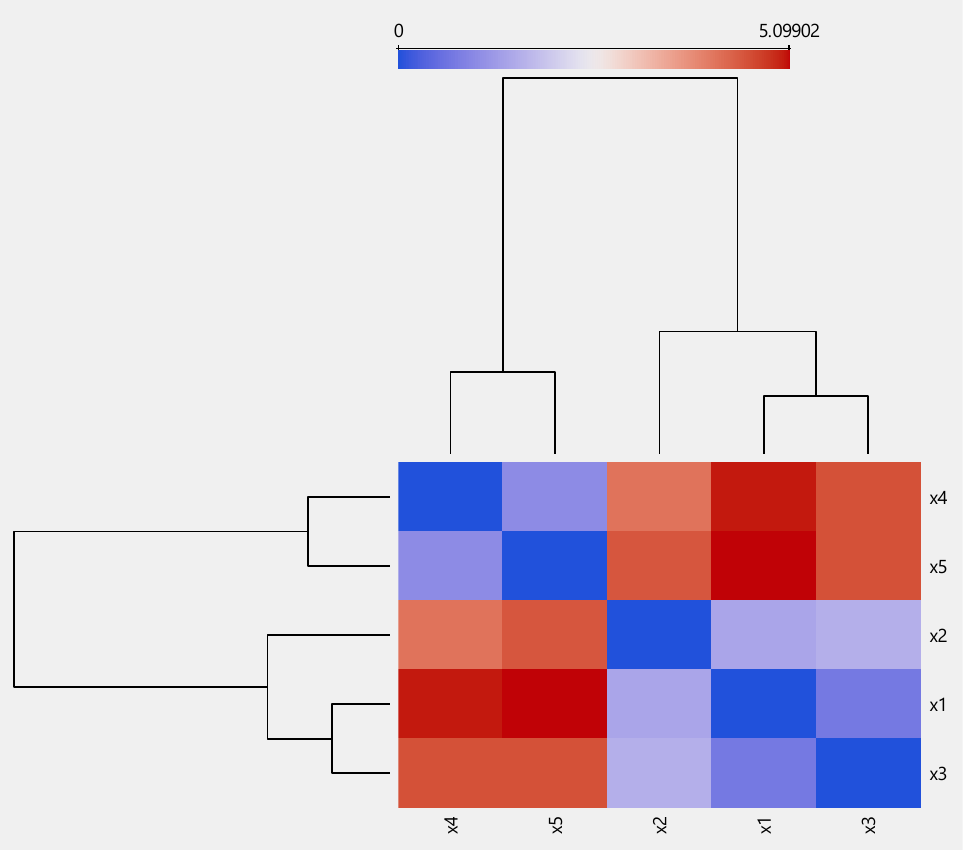}
  \caption{Hierarchical clustering on the distance matrix of data points in  Fig.~\ref{fig:toy_data}.}
  \label{fig:heatmap_clust}
\end{figure}

We will now define the silhouette, which was introduced in 1987 by P.J. Rousseeuw \citep{ogsilhouette}. We employ similar notation.

\begin{definition}[Silhouette]
Let \(\mcC_K\) be a \(K\)-clustering of the dataset \(\{1,\dots,n\}\) with dissimilarity matrix \(\Delta\).
The \textbf{silhouette width} for data point $i\in C_I$ is $0$ if $|C_I|=1$; otherwise it is
\begin{align*}
    s(i| \mcC_K, \Delta)\coloneqq \frac{b(i)-a(i)}{\max\{a(i), b(i)\}},
\end{align*}
where 
\begin{align*}
    a(i)\coloneqq \frac{1}{|C_I| - 1}\sum_{j\in C_I\setminus\{i\}} \Delta_{ij}, \;\;\; \text{and} \;\;\;
    b(i)\coloneqq \min_{J\neq I} \left(\frac{1}{|C_J|}\sum_{j\in C_J}\Delta_{ij}\right).
\end{align*}
The \textbf{average silhouette width} (\textbf{ASW}) is defined as
\begin{align*}
    ASW(\mcC_K, \Delta) \coloneqq \frac{1}{n}\sum_{i=1}^n s(i|\mcC_K,\Delta).
\end{align*}
\end{definition}
Note that the silhouette width of $i$ may be written as 
\begin{align*}
    s(i|\mcC_K,\Delta) = \begin{cases}
        1 - a(i)/b(i) \text{ if } a(i) < b(i), \\
        0 \text{ if } a(i)=b(i), \\ 
        b(i)/a(i) - 1 \text{ if } a(i) > b(i).
    \end{cases}
\end{align*}
Consequently, \(-1 \le s(i\,|\,\mathcal{C}_K,\Delta) \le 1\).

To illustrate with an example, the silhouette value of each data point from the clustering results shown in Fig.~\ref{fig:clust_results} is displayed in Tab.~\ref{tab:silhouette_components} and Fig.~\ref{fig:silhouette_plot}.
The corresponding $\operatorname{ASW}$ is $0.7512$, indicating well-separated and internally cohesive clusters.

\begin{table}[!htb]
  \centering
  \caption{Silhouette components for each point in the toy dataset after applying K-means clustering with $K=2$.}
  \renewcommand{\arraystretch}{1.1}
  \begin{tabular}{c|ccc}
    \(i\) & \(a(i)\) & \(b(i)\) & \(s(i)\) \\\hline
    \(x_1\) & 1.061 & 5.050 & 0.790 \\
    \(x_2\) & 1.498 & 4.298 & 0.652 \\
    \(x_3\) & 1.144 & 4.528 & 0.747 \\
    \(x_4\) & 1.000 & 4.550 & 0.780 \\
    \(x_5\) & 1.000 & 4.700 & 0.787 \\
  \end{tabular}
  \label{tab:silhouette_components}
\end{table}

\begin{figure}[!htb]
  \centering
  \includegraphics[width=\linewidth]{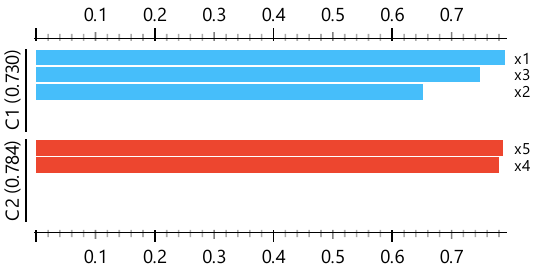}
  \caption{Silhouette plots of clustering results from Fig.~\ref{fig:clust_results}.}
  \label{fig:silhouette_plot}
\end{figure}

Individual silhouette widths provide a valuable graphical diagnostic (see Fig.~\ref{fig:silhouette_plot}), but for large \(n\) it is customary to summarize the clustering quality by \(\operatorname{ASW}\) \citep{medoid}. A common heuristic is to run a clustering algorithm for several choices of \(K\) and select the value that maximizes \(\operatorname{ASW}\).
When even that maximum is low, the limitation often stems from the intrinsic structure of the dissimilarity data rather than a suboptimal choice of algorithm. This raises the natural question: given a dissimilarity matrix, what is the highest possible $\mathrm{ASW}$? To discuss this question, let $\mcC^*$ denote a silhouette-optimal clustering.

\begin{definition}[Silhouette-optimal clustering]
Let $\Delta$ be the dissimilarity matrix of the dataset $\{1,\ldots,n\}$. We define
$$
\mfC\coloneqq \{\mcC_K:\mcC_K \text{ is a $K$-clustering of $\{1,\ldots,n\}$}\}
$$
to be the set of all possible clusterings. 
Any
$$
\mcC^*\in\argmax_{\mcC_K\in \mfC}\{\mathrm{ASW}(\mcC_K,\Delta)\}
$$
is called a \textbf{silhouette-optimal clustering}.
\end{definition}

Once again, recall the toy data (Tab.~\ref{tab:example_data}) and its dissimilarity matrix $\Delta$.  
We evaluate $\operatorname{ASW}(\mcC_K,\Delta)$ for every admissible clustering with $K\in\{2,3,4,5\}$ (all set partitions except the trivial $K{=}1$ case). Among the 51 candidates \footnote{The fifth Bell number \(B_5=52\) and excluding \(K{=}1\) gives \(51\).}, the maximizer is unique:
\[\mcC^{\ast} \;=\; \bigl\{\{x_1,x_2,x_3\},\; \{x_4,x_5\}\bigr\}
\quad\text{with}\quad
\operatorname{ASW}(\mcC^{\ast},\Delta) \;=\; 0.7512,
\]
which matches the per-point silhouettes reported in Tab.~\ref{tab:silhouette_components} and Fig.~\ref{fig:silhouette_plot}. Finally, the best achievable $\operatorname{ASW}$ within each fixed $K$ is depicted in Tab.~\ref{tab:best_by_k}.

\begin{table}[!htb]
  \centering
  \caption{Optimal $\operatorname{ASW}$ within each $K$ on the toy dataset.}
  \renewcommand{\arraystretch}{1.1}
  \begin{tabular}{c|l|c}
    $K$ & Best partition & Best ASW \\ \hline
    2 & $\{x_1,x_2,x_3\}\mid\{x_4,x_5\}$ & 0.7512 \\
    3 & $\{x_1,x_3\}\mid\{x_2\}\mid\{x_4,x_5\}$ & 0.5173 \\
    4 & $\{x_1\}\mid\{x_2\}\mid\{x_3\}\mid\{x_4,x_5\}$ & 0.3068 \\
    5 & $\{x_1\}\mid\{x_2\}\mid\{x_3\}\mid\{x_4\}\mid\{x_5\}$ & 0.0000 \\
  \end{tabular}
  \label{tab:best_by_k}
\end{table}

% -----------------------------------------------------------------------------
\section{The proposed upper bound of the silhouette} \label{sec:proposed_method}
\subsection{Overview}\label{sec:algo-at-a-glance}
We propose a \emph{data-dependent ceiling} on the ASW that can be computed \emph{before} any clustering is performed. For each data point, we derive a sharp upper bound of its silhouette width. Here, \emph{sharp} means that the individual silhouette width of a data point can never exceed the bound, and moreover, there exists a clustering in which the bound is exactly achieved. Averaging these per-point ceilings produces a single, dataset-level upper bound on the ASW. The bound is computed from the dissimilarity matrix of the dataset in \(O(n^2\log n)\) time, and supports optional minimum-cluster-size constraints to make it even sharper.

% Practically, the bound serves three roles: (i) a global ceiling to decide when further optimization is unlikely to pay off, (ii) a point-level diagnostic to flag observations that can never appear strongly \emph{clusterable}, and (iii) a headroom signal for choosing the number of clusters \(K\): if the best ASW obtained is already close to the ceiling, one can stop. We formalize this construction next via the \emph{$\quotientsymbol$-quotient} and prove the bound in the next section.

\subsection{Derivation}\label{subsec:math_formulation}

Consider an arbitrary clustering $\mcC_K$ of the dataset $\{1,\ldots,n\}$ with dissimilarites $\Delta$. We will now formalize the upper bound of the silhouette width for point $i$, i.e. $s(i|\mcC_K,\Delta)$. 

For data point $i$, let $|C_I|$ denote the size of its cluster in $\mcC_K$. The average distance between $i$ and the other data points in the same cluster is not smaller than the average distance between $i$ and the $|C_I|-1$ data points closest to $i$. Similarly, the average distance between $i$ and the $n-|C_I|$ points farthest from $i$ is not smaller than the average distance between $i$ and the data points in the neighboring cluster closest to $i$. These two observations are combined to construct a value guaranteed to be greater than or equal to $s(i|\mcC_K,\Delta)$. We formalize this reasoning via the $\quotientsymbol$-quotient of point $i$.
\begin{definition}
\textnormal{
    Let $\quotientsymbol \in \N$ with $1 \le \quotientsymbol \le n-1$. 
    The \emph{$\quotientsymbol$-quotient} of data point $i$ is
    \[
        q(i,\Delta,\quotientsymbol) \coloneqq 
        \begin{cases}
            \dfrac{n-\quotientsymbol}{\quotientsymbol-1}\cdot\dfrac{\sum_{j=1}^{\quotientsymbol-1}\hat{\Delta}_{ij}}{\sum_{j=\quotientsymbol}^{n-1}\hat{\Delta}_{ij}} & \textnormal{if } \quotientsymbol>1,\\[1ex]
            1, & \textnormal{if } \quotientsymbol=1.
        \end{cases}
    \]
}
\end{definition}
Recall the following basic fact.
\begin{lemma}
    \label{lemma:lem}
    Let $A=\{a_1,\ldots,a_N\}$ be a collection of $N$ real values, and consider 
    a partition of $A$ into subsets $A_1,\ldots,A_M$. Then 
    $$
    \min_L \frac{1}{|A_L|}\sum_{a\in A_L}a \leq \frac{1}{|A|}\sum_{a\in A}a.
    $$ 
\end{lemma}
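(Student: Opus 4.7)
The plan is to recognize that the overall mean of $A$ is a convex combination of the block means $\bar{a}_L \coloneqq \frac{1}{|A_L|}\sum_{a\in A_L}a$, after which the conclusion is immediate since a convex combination is never smaller than its smallest component.

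Concretely, I would first use the partition property $A=\bigsqcup_{L=1}^M A_L$ to rewrite
\[
\frac{1}{|A|}\sum_{a\in A}a \;=\; \frac{1}{|A|}\sum_{L=1}^{M}\sum_{a\in A_L}a \;=\; \sum_{L=1}^{M} \frac{|A_L|}{|A|}\,\bar{a}_L.
\]
Since the $A_L$ are nonempty and partition $A$, the coefficients $w_L \coloneqq |A_L|/|A|$ are positive and satisfy $\sum_L w_L = 1$, so the right-hand side is a convex combination of the block means.

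Finally I would invoke the elementary fact that a convex combination of real numbers is at least their minimum: $\sum_L w_L \bar{a}_L \ge \min_L \bar{a}_L \cdot \sum_L w_L = \min_L \bar{a}_L$. Chaining this with the previous display yields the claimed inequality $\min_L \bar{a}_L \le \frac{1}{|A|}\sum_{a\in A}a$.

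There is no real obstacle here; the statement is a standard weighted-average identity, and the argument only needs the partition hypothesis to ensure the weights are nonnegative and sum to one. The only subtle point worth noting explicitly, should one wish to be thorough, is that the definition of the partition tacitly assumes each $A_L$ is nonempty so that $\bar{a}_L$ is well-defined and $\min_L \bar{a}_L$ makes sense.
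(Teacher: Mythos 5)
Your proof is correct and follows essentially the same route as the paper: both express the overall mean as the convex combination $\sum_{L}\frac{|A_L|}{|A|}\bar{a}_L$ of the block means and conclude that it cannot fall below the smallest of them. Your write-up is slightly more explicit about the weights summing to one and the nonemptiness of the blocks, but the underlying argument is identical.
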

\begin{proof}
    Let $\mu=\frac{1}{|A|}\sum_{a\in A} a$ and $\mu_L=\frac{1}{|A_L|}\sum_{a\in A_L}a$. We have 
    $\mu = \sum_{L=1}^M\frac{|A_L|}{|A|}\mu_L$, and since $\sum |A_L|=|A|$, the 
    statement holds. 
\end{proof}
\begin{proposition}
\label{prop:i_bound}
Let $\mcC_K$ be a $K$-clustering of the dataset $\{i\}_{i=1}^n$ with dissimilarites $\Delta$. Let $i$ represent any data point and let $C_I\in \mcC_K$ denote the cluster that contains $i$. Furthermore, let 
$$
f(i, \Delta)\coloneqq\min_{1\leq \quotientsymbol\leq n-1}\{q(i, \Delta, \quotientsymbol)\}
$$
denote a minimal $\quotientsymbol$-quotient. Then
$$
s(i|\mcC_K, \Delta)\leq 1 - f(i,\Delta).
$$
\end{proposition}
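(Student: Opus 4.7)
The plan is to pick out the one specific value of the free index $\quotientsymbol$ that makes the bound tight for the given clustering, namely $\quotientsymbol = |C_I|$, the size of the cluster to which $i$ belongs; because $f(i,\Delta)$ is the minimum over all admissible $\quotientsymbol$, it suffices to prove the stronger pointwise inequality $s(i\mid \mcC_K,\Delta) \leq 1 - q(i,\Delta,|C_I|)$. I would split the argument into the boundary case $|C_I|=1$ and the main case $|C_I|\geq 2$.

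For $|C_I|=1$, the silhouette width is $0$ by convention, and the bound reduces to $0 \leq 1 - f(i,\Delta)$, which follows from $f(i,\Delta) \leq q(i,\Delta,1) = 1$. For the main case, set $\quotientsymbol = |C_I| \geq 2$. Since $\hat{\Delta}_{i1} \leq \cdots \leq \hat{\Delta}_{i,n-1}$ lists the off-diagonal dissimilarities in increasing order, the sum of $\Delta_{ij}$ over any $\quotientsymbol - 1$ indices $j \neq i$ is at least the sum of the $\quotientsymbol - 1$ smallest, giving
\[
a(i) \;=\; \frac{1}{\quotientsymbol-1}\sum_{j\in C_I\setminus\{i\}}\Delta_{ij}
\;\geq\; \frac{1}{\quotientsymbol-1}\sum_{j=1}^{\quotientsymbol-1}\hat{\Delta}_{ij}.
\]
Dually, the $n - \quotientsymbol$ distances from $i$ to points outside $C_I$ are collectively no larger than the sum of the $n-\quotientsymbol$ largest entries of the $i$-th row, so by Lemma~\ref{lemma:lem} applied with $A = \{\Delta_{ij}: j \notin C_I\}$ and the partition induced by the rival clusters,
\[
b(i) \;=\; \min_{J\neq I}\frac{1}{|C_J|}\sum_{j\in C_J}\Delta_{ij}
\;\leq\; \frac{1}{n-\quotientsymbol}\sum_{j\notin C_I}\Delta_{ij}
\;\leq\; \frac{1}{n-\quotientsymbol}\sum_{j=\quotientsymbol}^{n-1}\hat{\Delta}_{ij}.
\]
Dividing the two inequalities yields $a(i)/b(i) \geq q(i,\Delta,\quotientsymbol) \geq f(i,\Delta)$.

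It then remains to convert this ratio inequality into the claimed silhouette bound. If $a(i) \leq b(i)$, then $s(i\mid \mcC_K,\Delta) = 1 - a(i)/b(i) \leq 1 - f(i,\Delta)$ immediately. Otherwise $s(i\mid \mcC_K,\Delta) \leq 0 \leq 1 - f(i,\Delta)$, where the final inequality again uses $f(i,\Delta) \leq 1$. The main obstacle I anticipate is the upper bound on $b(i)$: one must correctly combine Lemma~\ref{lemma:lem}, which replaces the minimum cluster-average by the overall average over points outside $C_I$, with the sorting-based inequality that replaces this average by the average of the $n-\quotientsymbol$ largest row entries; getting the indexing of $\hat{\Delta}$ and the denominator $n-\quotientsymbol$ to line up with the definition of $q$ is the only delicate bookkeeping. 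A minor care point is the degenerate subcase $b(i)=0$ (so $a(i)=0$ too), where one falls back on the $a(i)=b(i)$ convention $s(i)=0$.
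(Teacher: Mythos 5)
Your proof is correct and follows essentially the same route as the paper's: evaluate the $\quotientsymbol$-quotient at $\quotientsymbol=|C_I|$, lower-bound $a(i)$ by the average of the $|C_I|-1$ nearest neighbours, upper-bound $b(i)$ by the average of the $n-|C_I|$ farthest points via Lemma~\ref{lemma:lem}, and dispose of the $|C_I|=1$ and $a(i)\ge b(i)$ cases using $f(i,\Delta)\le 1$. The only cosmetic differences are that you obtain $f(i,\Delta)\le 1$ from $q(i,\Delta,1)=1$ whereas the paper argues via $q(i,\Delta,2)\le 1$, and that you explicitly flag the degenerate $b(i)=0$ subcase, which the paper leaves implicit.
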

\begin{proof}
    Suppose $q(i,\Delta,2)>1$. Then $(n-2)\hat{\Delta}_{i1}>\sum_{j=2}^{n-1}\hat{\Delta}_{ij}$, which contradicts 
    $\hat{\Delta}_{i1}\leq \hat{\Delta}_{ij}$ for $j\geq 2$. Hence, $0\leq f(i,\Delta) \leq 1$. This proves 
    the statement if $|C_I|=1$ and for cases $a(i)\geq b(i)$ when $|C_I|>1$. Consider $q(i,\Delta,|C_I|)$. It is 
    clear that $\sum_{j=1}^{|C_I|-1}\hat{\Delta}_{ij}/(|C_I|-1)\leq a(i)$. 
    It remains to argue that $\sum_{j=|C_I|}^{n-1}\hat{\Delta}_{ij}/(n-|C_I|)\geq b(i)$. By Lemma~\ref{lemma:lem}, we have
    \begin{align*}
        \frac{1}{n-|C_I|}\sum_{j=|C_I|}^{n-1}\hat{\Delta}_{ij} &\geq \frac{1}{n-|C_I|}\sum_{j\in \bigcup_{J\neq I}C_J}\Delta_{ij} \\ &\geq \min_{J\neq I} \left(\frac{1}{|C_J|}\sum_{j\in C_J}\Delta_{ij}\right)=b(i).
    \end{align*}
    Thus, $f(i,\Delta)\leq q(i,\Delta,|C_I|)\leq a(i)/b(i)$ and 
    $1-f(i,\Delta)\geq 1- a(i)/b(i)$.
\end{proof}
Taking the mean of all silhouette width upper bounds from Proposition~\ref{prop:i_bound} results in the following inequality, which is the main observation of this paper:
\begin{align}
    \label{eq:asw_ub}
    ASW(\mcC^*,\Delta) \leq 1 - \frac{1}{n}\sum_i f(i, \Delta).
\end{align}
We remark that the right-hand side in \ref{eq:asw_ub} depends only on the underlying data $\Delta$.
Crucially, we note that $1 - \frac{1}{n}\sum_i f(i, \Delta)\leq 1$.
Moreover, since any $\quotientsymbol$-quotient naturally corresponds to a $2$-clustering of the dataset, the bound in Proposition~\ref{prop:i_bound}
is indeed sharp. 

Finally, imposing the narrower definition
$$
f_\kappa(i,\Delta)\coloneqq \min_{\kappa \leq \quotientsymbol \leq n - \kappa}\{q(i,\Delta,\Lambda)\}
$$  
for some $\kappa\in\{2,\ldots, \lfloor n/2\rfloor\}$ is equivalent to constraining the solution space to clusterings having $\min_I|C_I|\geq \kappa$. Often, as we shall see, this leads to strictly tighter limits. 

% --------------------------------------------------------

\subsection{Algorithm}

\begin{algorithm}[!htb]
\caption{Silhouette upper bound (pointwise)}
\label{algo:upper_bound}
\KwIn{$\Delta\in\R^{n\times n}$\tcp*[r]{Dissimilarity matrix}}
\KwIn{$\kappa$\tcp*[r]{Smallest allowed cluster size, \(1\le \kappa\le \lfloor n/2\rfloor\)}}
\KwOut{$A\in\R^n$\tcp*[r]{Array of individual upper bounds}}
Initialize array $A \gets [\;]$ \;
Compute $\hat{\Delta}$ from $\Delta$\;
\For{$i = 1,\ldots,n$}{
    $y \gets \sum_{j = \kappa}^{n - 1} \hat{\Delta}_{i,j}$ \;
    $x \gets 0$ \;
    $q \gets 1$ \tcp*[r]{Initialize $\quotientsymbol$-quotient}
    \If{$\kappa>1$}{
        $x \gets \sum_{j = 1}^{\kappa - 1} \hat{\Delta}_{i,j}$
        \;
        $q \gets \frac{x}{\kappa - 1}\big/\frac{y}{n - \kappa}$\;
    }
    \For{$\quotientsymbol = \kappa + 1, \ldots, n - \kappa$}{
        $x \gets x + \hat{\Delta}_{i, \quotientsymbol - 1}$\;
        $y \gets y - \hat{\Delta}_{i, \quotientsymbol - 1}$\;
        $q_{\text{candidate}} \gets \dfrac{x}{\quotientsymbol - 1}\big/\dfrac{y}{n - \quotientsymbol}$\;
        \If{$q_{\text{candidate}} < q$}{ $q \gets q_{\text{cand}}$ \tcp*[r]{Update minimal \emph{$\quotientsymbol$-quotient}}}
    }
    Append $1 - q$ to $A$\;
}
\Return{$A$}
\end{algorithm}

The workflow of the proposed upper bound on the ASW is shown in Fig.\ref{fig:work_flow}. Following this workflow, Algo.~\ref{algo:upper_bound} computes for each data point $i$ the quantity $1-f_\kappa(i,\Delta)$, which is a sharp upper bound of the silhouette width of $i$ in the (restricted) solution space $\min_I|C_I|\geq \kappa$. The steps are as follows:
(1) For each row $i$ of $\Delta$, we sort the off-diagonal entries increasingly to
obtain $\hat{\Delta}_{i,1},\hat{\Delta}_{i,2},\ldots$,
the nearest-to-farthest distances from $i$.  
(2) For $i$, we initialize the running sums $x$ and $y$ so that
$x/(\kappa-1)$ is its mean distance to its $\kappa-1$ nearest points,
and $y/(n-\kappa)$ is its mean distance to its $n-\kappa$ farthest points.  
(3) Starting from $\quotientsymbol=\kappa+1$, we update these running sums as
$\quotientsymbol$ increases, thereby scanning all admissible values
$\quotientsymbol\in\{\kappa,\ldots,n-\kappa\}$. For each $\quotientsymbol$, we form the
\emph{$\quotientsymbol$-quotient} $q(i,\Delta,\quotientsymbol)$ and keep the minimum.  
(4) Finally, we record $1-q$ as the upper bound for $s(i\,|\,\mcC_K,\Delta)$.
The array $A$ collects these values across all points, and its average equals the
dataset-level upper bound in Eq.~\eqref{eq:asw_ub} when $\kappa=1$.

\paragraph{Complexity}
Forming $\hat{\Delta}$ requires sorting each of the $n$ rows, which costs
$\mcO(n^2\log n)$. The subsequent linear scans are $\mcO(n^2)$, so the overall
run time is $\mcO(n^2\log n)$. This is slightly slower than computing the $\mathrm{ASW}$; $\mcO(n^2)$ \citep{medoid}.

\paragraph{Output summaries}
Given $\Delta$ and $\kappa=1$, let $\texttt{UB}(\Delta)$ denote the mean of the output array $A$
from Algo.~\ref{algo:upper_bound}. Then \(\texttt{UB}(\Delta)\) corresponds to the right-hand side of
Eq.~\eqref{eq:asw_ub}. To reflect the spread of the individual bounds, we denote by
\(\texttt{minUB}(\Delta)\) and \(\texttt{maxUB}(\Delta)\) the minimum and maximum values of $A$, respectively. Choosing $\kappa > 1$ imposes the restriction that no cluster is allowed to contain fewer than $\kappa$ objects. Let $\texttt{UB}_\kappa(\Delta)$ denote the $\mathrm{ASW}$ upper bound for this constrained solution space.

\paragraph{Remark}
The upper bound $\texttt{UB}(\Delta)$ is not guaranteed to be close to the true global maximum of $\mathrm{ASW}$; its
tightness is data-dependent. Nonetheless, it provides an enlightening piece of information that merits inclusion in analyses.

% --------------------------------------------------------------------------------
\subsection{Example}
\label{subsec:examples}
Recall the toy data from Tab.~\ref{tab:example_data}. We have $n=5$, so $\quotientsymbol\in\{1,2,3,4\}$. The values \(q(i,\Delta,\quotientsymbol)\) for all instances are shown in Tab.~\ref{tab:q_by_lambda}. Consider the $K{=}2$ clustering from Fig.~\ref{fig:clust_results}:
$C_1=\{x_1,x_2,x_3\}$ and $C_2=\{x_4,x_5\}$.
Tab.~\ref{tab:prop1_check} compares each silhouette width with the corresponding upper bound, illustrating that \(s(i)\le 1-f(i,\Delta)\) for all \(i\), with equality for \(x_2,x_4,x_5\)
and a small strict headroom for \(x_1\) (\(0.026\)) and \(x_3\) (\(0.054\)). Moreover, we have 
\[
\operatorname{ASW}(\mcC^\ast,\Delta)=0.7512 \;\le\; 0.7672
= 1-\frac{1}{n}\sum_{i=1}^n f(i,\Delta).
\]

\begin{table}[!htb]
  \centering
  \caption{All possible values of $q(i,\Delta,\quotientsymbol)$ for each point $i$ in the toy dataset.}
  \renewcommand{\arraystretch}{1.1}
  \begin{tabular}{c|cccc}
    \textbf{Point} & $q(\cdot,1)$ & $q(\cdot,2)$ & $q(\cdot,3)$ & $q(\cdot,4)$ \\ \hline
    $x_1$ & 1.000 & 0.184 & 0.210 & 0.466 \\
    $x_2$ & 1.000 & 0.417 & 0.348 & 0.531 \\
    $x_3$ & 1.000 & 0.199 & 0.253 & 0.502 \\
    $x_4$ & 1.000 & 0.220 & 0.538 & 0.643 \\
    $x_5$ & 1.000 & 0.213 & 0.568 & 0.654 \\
  \end{tabular}
  \label{tab:q_by_lambda}
\end{table}

\begin{table}[!htb]
  \centering
  \caption{Showing $s(i)\le 1-f(i,\Delta)$ for each point in the toy dataset.}
  \renewcommand{\arraystretch}{1.12}
  \begin{tabular}{c|c|c|c|c|c}
    \(i\) & \(|C_I|\) & \(\Lambda^\star=\arg\min_\Lambda q(i,\Delta,\Lambda)\) & \(f(i,\Delta)\) & \(1-f(i,\Delta)\) & \(s(i)\) \\\hline
    \(x_1\) & 3 & 2 & 0.184 & 0.816 & 0.790 \\
    \(x_2\) & 3 & 3 & 0.348 & 0.652 & 0.652 \\
    \(x_3\) & 3 & 2 & 0.199 & 0.801 & 0.747 \\
    \(x_4\) & 2 & 2 & 0.220 & 0.780 & 0.780 \\
    \(x_5\) & 2 & 2 & 0.213 & 0.787 & 0.787 \\
  \end{tabular}
  \label{tab:prop1_check}
\end{table}

\begin{figure*}[!htb]
\vspace{-2cm}
\centering
\includegraphics[width=\linewidth]{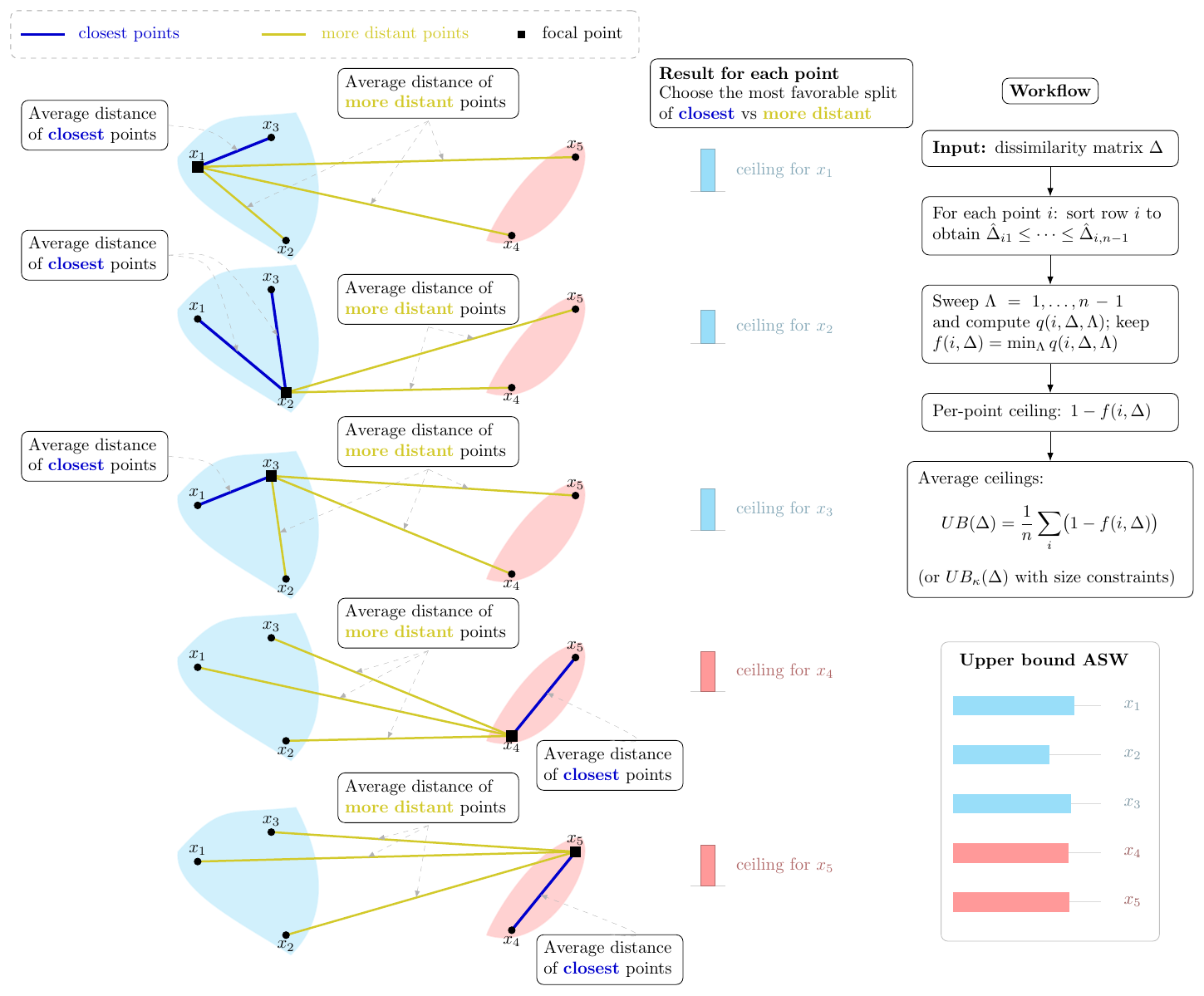}
\caption{Workflow of the proposed upper bound on the $\mathrm{ASW}$.}
\label{fig:work_flow}
\end{figure*}
% --------------------------------------------------------------------------------

\subsection{Enabling early stopping of optimization loops through the upper bound}
\label{subsec:select_optimal_k}

We now demonstrate how the data–dependent upper bound introduced in Eq.~\eqref{eq:asw_ub} can be used to accelerate 
standard ASW based optimization loops. In many practical applications, selecting the number of clusters involves 
iterating over $K$ and evaluating the output of the clustering algorithm $\mcA(K)$ in order to maximize the ASW. 
When the dataset is large, each iteration can be computationally demanding, and exhaustive searches over 
all candidate $K$ quickly become demanding. By incorporating the upper bound into this procedure, 
we can enable \emph{early stopping}: if the current ASW is provably within a user–specified relative tolerance 
of the global optimum, the loop halts, saving potentially substantial computation while still guaranteeing 
near–optimal clustering performance within the search space.
This type of procedure is described in more detail in Algo.~\ref{algo:early_stopping}.

\begin{algorithm}[!htb]
\caption{Early stopping in ASW-based optimization}
\label{algo:early_stopping}
\KwIn{$\Delta\in\R^{n\times n}$\tcp*[r]{Dissimilarity matrix}}
\KwIn{$\epsilon$\tcp*[r]{Relative error tolerance, $0\leq \epsilon \leq 1$}}
\KwIn{$\tau$\tcp*[r]{Clusterability threshold, $0\leq \tau \leq 1$}}
\KwIn{$\mcA(K), K_{\max}$\tcp*[r]{Clustering algorithm, max.\ no.\ clusters}}
\KwOut{ASW-optimal clustering of $\Delta$ (within search space)}
Compute $\mathrm{UB}(\Delta)$\tcp*[r]{Apply Algo.~\ref{algo:upper_bound}}
\If(\tcp*[f]{Abort if dataset is insufficiently clusterable}){$\mathrm{UB}(\Delta)\leq \tau$}{
    \Return{NULL}
}
optimal clustering $\gets$ NULL \;
optimal ASW $\gets$ -1 \;
\For{$K=2,\ldots,K_{\max}$}{
    \If{$\mathrm{ASW}(\mcA(K),\Delta)>\text{optimal ASW}$}{
        optimal clustering $\gets \mcA(K)$ \;
        optimal ASW $\gets \mathrm{ASW}(\mcA(K),\Delta)$ \;
        \If{$\frac{\mathrm{UB}(\Delta)-\text{optimal ASW}}{\mathrm{UB}(\Delta)}<\epsilon$}{
            \Return{optimal clustering}
        }
    }
}
\Return{optimal clustering}
\end{algorithm}

\paragraph{Workflow}
The procedure integrates the proposed upper bound into a standard ASW optimization loop. 
At each step, the current best clustering is compared to the global upper bound. 
If the relative gap falls below the user-specified tolerance $\epsilon$, the algorithm terminates early, 
certifying that no further search can improve ASW by more than $\epsilon\%$ of the true global optimum. 
In the worst case, the condition is never met, and the loop reverts to the standard exhaustive search over $K=2,\ldots,K_{\max}$, ensuring no loss in clustering quality.

\paragraph{Remark}
If minimal cluster size constraints are of interest, the procedure can be modified by replacing the initial call to $\mathrm{UB}(\Delta)$ with the restricted version $\mathrm{UB}_\kappa(\Delta)$. 
This enables early stopping under user-specified requirements. We also emphasise that \emph{clusterability} in this context is interpreted solely through the lens of the silhouette coefficient. While this provides a useful and practical criterion for our purposes, it is by no means a universal or definitive notion of clusterability—many alternative perspectives exist in the literature.

Recall the toy dataset summarized in Tab.~\ref{tab:best_by_k}. In Sec. ~\ref{subsec:examples}, we established that the corresponding upper bound is
$\texttt{UB}(\Delta)=0.7672$. Let $\widehat S_K$ denote the empirically observed $\mathrm{ASW}$ value for $K$ clusters. For the toy dataset, these values are:
\[
\widehat S_2=0.7512,\quad
\widehat S_3=0.5173,\quad
\widehat S_4=0.3068,\quad
\widehat S_5=0.0000.
\]

The performance metric of interest is the \emph{relative error}, defined as
\[
\frac{\mathrm{ASW}(C^*,\Delta)-\widehat S_K}{\mathrm{ASW}(C^*,\Delta)},
\]
where $C^*$ denotes the clustering achieving the global maximum ASW. Since $\mathrm{ASW}(C^*,\Delta)$ is unknown in practice, we use the bound $\texttt{UB}(\Delta)$ to obtain a conservative estimate:
\[
\frac{\mathrm{ASW}(C^*,\Delta)-\widehat S_K}{\mathrm{ASW}(C^*,\Delta)}
~\le~
\frac{\texttt{UB}(\Delta)-\widehat S_K}{\texttt{UB}(\Delta)}.
\]
We refer to this quantity as the \emph{worst-case relative error}. Its interpretation is straightforward: if the worst-case relative error falls below a tolerance $\epsilon$, then the empirical $\widehat S_K$ is guaranteed to be within $\epsilon$ of the true maximum ASW.

\begin{table}[!htb]
  \centering
  \caption{Early stopping on the toy dataset.}
  \renewcommand{\arraystretch}{1.1}
  \begin{tabular}{c|c|c|c}
    $K$ & $\widehat S_K$ & $\texttt{UB}(\Delta)$ & worst-case relative error \\\hline
    2 & 0.7512 & 0.7672 & 2$\%$ \\
    3 & 0.5173 & 0.7672 & 33$\%$ \\
    4 & 0.3068 & 0.7672 & 60$\%$ \\
    5 & 0.0000 & 0.7672 & 100$\%$ \\
  \end{tabular}
  \label{tab:early_stopping_toy}
\end{table}

As illustrated by Tab.~\ref{tab:early_stopping_toy}, applying Algo.~\ref{algo:early_stopping} with relative error tolerance $\epsilon=0.05$ for the toy dataset would halt the process already at $K=2$, thereby avoiding unnecessary evaluations for $K=3,4,5$. In this way, the upper bound allows substantial savings while still certifying near-optimal performance.
% -----------------------------------------------------------------------------

\section{Experiments, results \& discussion} \label{sec:experiments}

\subsection{Experimental datasets}

% Requires (once in preamble): \usepackage{tabularx,array,booktabs}
\begin{table*}[!htb]
\vspace{-2cm}
\centering
\caption{Datasets used in our experiments and their characteristics. The Metric(s) column indicates the distance metrics used in the experiments.}
\label{tab:datasets_all}
\renewcommand{\arraystretch}{1.05}
\setlength{\tabcolsep}{4pt} % no font-size change
\begin{tabularx}{\textwidth}{@{}
  c
  >{\hsize=0.6\hsize\raggedright\arraybackslash}X % Dataset (narrower)
  >{\raggedright\arraybackslash}X                  % Name
  r r r                                            % Instances, Features, Classes
  >{\raggedright\arraybackslash}X                  % Field
  >{\hsize=1.4\hsize\raggedright\arraybackslash}X  % Metric(s) (wider)
@{}}
\toprule
ID & Dataset & Name & Instances & Features & Classes & Field & Metric(s) \\
\midrule
1  & gene            & Gene expressions   & 801  & 20531 & Unknown & Genomics         & Euclidean, correlation \\
2  & ceramic         & Ceramic samples    & 88   & 17    & Unknown & Archaeology      & Euclidean \\
3  & relig. texts    & Religious texts    & 589  & 8266  & Unknown & Humanities       & Cosine, Euclidean, Jaccard \\
4  & conf. papers    & Conference papers  & 5804 & 11463 & Unknown & Computer science & Cosine, Euclidean, Jaccard \\
\midrule
5  & wdbc            & Breast Cancer Wisconsin Diagnostic & 569  & 30 & 2 & Healthcare        & Euclidean \\
6  & wisc            & Breast Cancer Wisconsin Original    & 699  & 9  & 2 & Healthcare        & Euclidean \\
7  & ecoli           & Ecoli                                 & 336  & 7  & 8 & Biology           & Euclidean \\
8  & wine            & Wine                                  & 178  & 13 & 3 & Chemistry         & Euclidean \\
9  & iono            & Ionosphere                            & 351  & 34 & 2 & Physics           & Euclidean \\
10 & zoo             & Zoo                                   & 101  & 16 & 7 & Biology           & Euclidean \\
11 & iris            & Iris                                  & 150  & 4  & 3 & Botany            & Euclidean \\
12 & segment         & Image Segmentation                    & 2310 & 19 & 7 & Computer science  & Euclidean \\
13 & dermatology     & Dermatology                           & 366  & 34 & 6 & Healthcare        & Euclidean \\
14 & balance-scale   & Balance Scale                         & 625  & 4  & 3 & Psychology        & Euclidean \\
15 & heart-statlog   & Statlog (Heart)                       & 270  & 13 & 2 & Healthcare        & Euclidean \\
16 & thy             & New Thyroid                           & 215  & 5  & 3 & Healthcare        & Euclidean \\
\bottomrule
\end{tabularx}
\end{table*}

We evaluate the upper bound by applying it to a variety of datasets, both synthetic and real-life. Multiple distance metrics are considered to examine the behavior of the upper bound under different notions of dissimilarity. (For some parts of the experiments we use the Jaccard distance, which means the data is treated in binary form and similarity is based on the amount of shared attributes.)

An essential part of the evaluation is to establish a tight lower bound for $\mathrm{ASW}(C^*, \Delta)$. To this end, we adopt an empirical approach: we apply several clustering algorithms to each dataset and take the largest observed $\mathrm{ASW}$ as the lower bound. The closer this value is to the upper bound, the narrower the interval in which the true global maximum lies.

The code used for the experiments is available in the GitHub repository \href{https://github.com/hugo-strang/silhouette-upper-bound}{https://github.com/hugo-strang/silhouette-upper-bound}.

\begin{figure*}[!htb]
% \vspace{-2cm}
\centering
\includegraphics[width=0.8\linewidth]{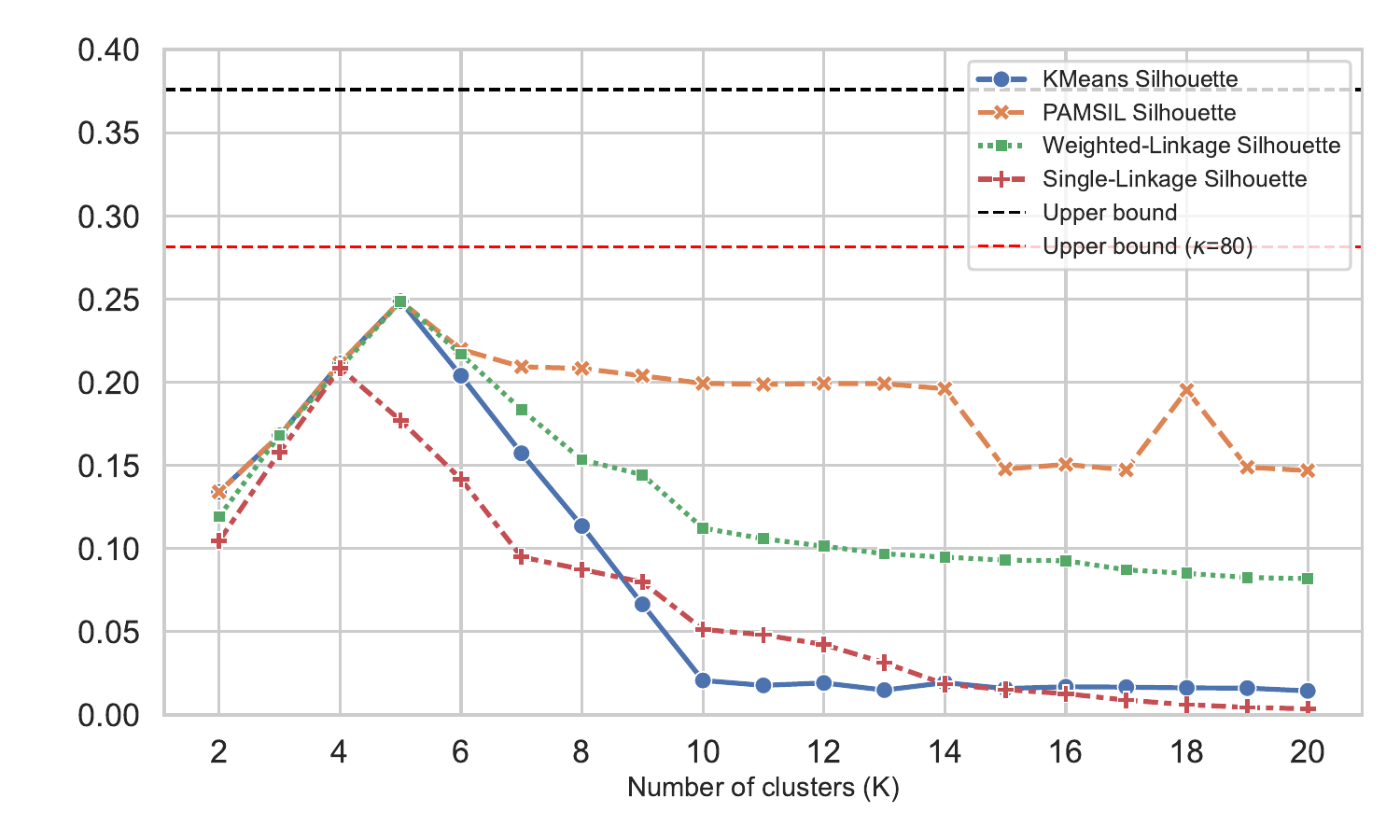}
\caption{
\textbf{Average silhouette width $\mathrm{ASW}$ vs. number of clusters ($K$).}
Results on the synthetic dataset \texttt{400-64-5-6} for $K$-means, PAM (PAMSIL), and Hierarchical clustering with weighted and single linkage. For each method and each chosen $K$, we compute the resulting $\mathrm{ASW}$. The black dashed line marks the theoretical upper bound $\mathrm{UB}(\Delta)=0.376$; the red dashed line marks the upper bound when a minimum cluster size of $80$ is enforced ($0.282$). The maximum observed $\mathrm{ASW}$ occurs at $K=5$—a partition into five equal-sized clusters of 80 elements—and is achieved by $K$-means, Hierarchical (weighted linkage), and PAMSIL.
}
\label{fig:silh_plot}
\end{figure*}

\paragraph{Synthetic datasets}
We follow \citep{gaussian} in notation and approach. Synthetic datasets are generated using the \texttt{make\_blobs} function from \texttt{scikit-learn} \citep{scikit}. Each dataset is labeled as $n\_samples-n\_features-centers-cluster\_std$, indicating the parameter values used to generate the points. For example, \texttt{400-16-3-4} denotes a dataset of $400$ $16$-dimensional vectors partitioned into $3$ clusters with standard deviation $4$, which controls within-cluster dispersion.

\paragraph{Real-life datasets}
We use sixteen datasets spanning unlabeled (IDs~1--4) and labeled (IDs~5--16) settings (Tab.~\ref{tab:datasets_all}). The unlabeled datasets are named \emph{gene expressions}, \emph{ceramic samples}, \emph{religious texts}, and \emph{conference papers}. For these, we search over \(K\) for different algorithms to maximize $\operatorname{ASW}$. The labeled datasets consist of twelve standard benchmarks across multiple domains-\emph{wdbc}, \emph{wisc}, \emph{ecoli}, \emph{wine}, \emph{iono}, \emph{zoo}, \emph{iris}, \emph{segment}, \emph{dermatology}, \emph{balance-scale}, \emph{heart-statlog}, \emph{thy}-ranging from 150 to 2,310 instances, 4 to 34 features, and 2 to 8 classes. For these datasets, we use the ground-truth class count to set \(K\) (e.g., \(K=8\) for \emph{ecoli}), run \(K\)-medoids to obtain a \(K\)-clustering, and compare its $\operatorname{ASW}$ with the bound; we also repeat a search over varying \(K\) to maximize $\operatorname{ASW}$, mirroring the unlabeled protocol.

\begin{table*}[!htb]
 %\vspace{-2cm}
  \centering
  \caption{{\bf Experimental results on synthetic data.} Each dataset is denoted as \texttt{n\_samples-n\_features-centers-cluster\_std}. For Hierarchical clustering (weighted and single linkage), $K$-means, and $K$-medoids (Hybrid), we report the average silhouette width (ASW); the selected number of clusters $K$ is shown in parentheses. For $K$-medoids, $\dagger$ indicates that PAMSIL was used; datasets with $\geq\!1000$ samples used PAMMEDSIL to reduce runtime. The last three columns summarize the output of Algo.~\ref{algo:upper_bound}: $\mathrm{UB}(\Delta)$ ($\mathrm{ASW}$ upper bound), $\min\mathrm{UB}(\Delta)$, and $\max\mathrm{UB}(\Delta)$ (smallest and largest individual silhouette width upper bound respectively). All results use Euclidean distance.
  }
  \label{tab:table1}
  \begin{tabular}{ccccccccc} \toprule
      Dataset & \multicolumn{1}{p{1.5cm}}{\centering Metric} & \multicolumn{1}{p{2cm}}{\centering Hierarchical \\ Weighted} & \multicolumn{1}{p{2cm}}{\centering Hierarchical \\ Single} & \multicolumn{1}{p{1.5cm}}{\centering $K$-means} &
      \multicolumn{1}{p{1.5cm}}{\centering $K$-medoids \\ Hybrid} &\multicolumn{1}{p{1.5cm}}{\centering $\texttt{UB}(\Delta)$} & \multicolumn{1}{p{1.5cm}}{\centering $\texttt{minUB}(\Delta)$} & \multicolumn{1}{p{1.5cm}}{\centering $\texttt{maxUB}(\Delta)$} \\ \midrule
      400-64-5-6 & Euclidean & $0.249$ (5) & $0.209$ (4) & $0.249$ (5)& $0.249^\dagger$ (5) & $0.376$ & $0.276$ & $0.467$ \\ 
    400-64-2-2 & Euclidean & $0.673$ (2) & $0.673$ (2) & $0.673$ (2)& $0.673^\dagger$ (2) & $0.673$ & $0.619$ & $0.718$ \\ 
    400-128-7-3 & Euclidean & $0.522$ (7) & $0.522$ (7) & $0.522$ (7)& $0.522^\dagger$ (7) & $0.566$ & $0.514$ & $0.619$ \\ 
    1000-161-2-13 & Euclidean & $0.084$ (2) & $0.068$ (2) & $0.084$ (2)& $0.046$ (3) & $0.182$ & $0.137$ & $0.256$ \\ 
    1000-300-5-2 & Euclidean & $0.668$ (5) & $0.668$ (5) & $0.668$ (5)& $0.668$ (5) & $0.678$ & $0.651$ & $0.706$ \\ 
    10000-32-20-2 & Euclidean & $0.626$ (20) & $0.626$ (20) & $0.626$ (20)& $0.626$ (20) & $0.774$ & $0.681$ & $0.847$ \\ 
    10000-1024-20-4 & Euclidean & $0.417$ (20) & $0.417$ (20) & $0.417$ (20)& $0.417$ (20) & $0.454$ & $0.425$ & $0.488$ \\ \bottomrule
  \end{tabular}%
\end{table*}

\begin{table*}[!htb]
  \centering
  % \vspace{-1cm}
  \caption{\textbf{Experimental results on real unlabeled data.}
    For each dataset (the distance/dissimilarity used is shown in the ``Metric'' column), we report the average silhouette width (ASW) for Hierarchical clustering (weighted and single linkage), $K$-means, and $K$-medoids (Hybrid); the selected number of clusters $K$ is given in parentheses. For $K$-medoids, $\dagger$ indicates PAMSIL; datasets with $\geq\!1000$ samples used PAMMEDSIL to reduce runtime. When the metric is incompatible with $K$-means (e.g., correlation/cosine or Jaccard), the entry is marked N/A. The last three columns summarize the output of Algo.~\ref{algo:upper_bound}: $\mathrm{UB}(\Delta)$ ($\mathrm{ASW}$ upper bound), $\min\mathrm{UB}(\Delta)$, and $\max\mathrm{UB}(\Delta)$ (smallest and largest individual silhouette width upper bound respectively).}

  \label{tab:table2}
  \begin{tabular}{ccccccccc} \toprule
      Dataset & \multicolumn{1}{p{1.5cm}}{\centering Metric} & \multicolumn{1}{p{2cm}}{\centering Hierarchical \\ Weighted} & \multicolumn{1}{p{2cm}}{\centering Hierarchical \\ Single} & \multicolumn{1}{p{1.5cm}}{\centering $K$-means} & \multicolumn{1}{p{1.5cm}}{\centering $K$-medoids \\ Hybrid} & $\texttt{UB}(\Delta)$ & \multicolumn{1}{p{1.5cm}}{\centering $\texttt{minUB}(\Delta)$} & \multicolumn{1}{p{1.5cm}}{\centering $\texttt{maxUB}(\Delta)$}  \\ \midrule
      Gene expressions & Correlation & $0.345$ (10) & $0.332$ (2) & N/A & $0.395^\dagger$ (7) & $0.657$ & $0.325$ & $0.865$  \\ 
      Religious texts& Cosine & $0.082$ (234) & $0.049$ (2) & N/A & $0.087^\dagger$ (23) & $0.349$ & $0.079$ & $0.810$ \\
      Conference papers& Cosine & $0.188$ (2) & $0.162$ (2) & N/A & $0.126$ (2) & $0.473$ & $0.118$ & $1.0$ \\
      Religious texts& Euclidean & $0.846$ (2) & $0.846$ (2) & $0.428$ (3) & $0.846^\dagger$ (2) & $0.846$ & $0.106$ & $0.895$ \\
      Ceramic samples& Euclidean & $0.547$ (2) & $0.591$ (2) & $0.584$ (2) & $0.584^\dagger$ (2) & $0.855$ & $0.561$ & $0.950$  \\
      Conference papers& Euclidean & $0.418$ (2) & $0.418$ (2) & $0.067$ (2) & $0.384$ (2) & $0.439$ & $0.227$ & $1.0$  \\
      Gene expressions & Euclidean & $0.207$ (2) & $0.188$ (2) & $0.228$ (6) & $0.230^\dagger$ (8) & $0.419$ & $0.174$ & $0.624$  \\
      Religious texts& Jaccard & $0.031$ (106) & $0.017$ (569) & N/A & $0.024^\dagger$ (24) & $0.119$ & $0.040$ & $0.719$ \\
      Conference papers& Jaccard & $0.160$ (2) & $0.160$ (2) & N/A & $0.160$ (2) & $0.170$ & $0.038$ & $1.0$  \\ \bottomrule
  \end{tabular}%
\end{table*}

\begin{table*}[!htb]
\centering
\caption{\textbf{Early stopping on real data.} Early stopping results (Algo.~\ref{algo:early_stopping}) with Euclidean dissimilarity. Best ASW per row is bold; corresponding $K$ is bold.}
\label{tab:alg2_early_stopping_excel_ub3_wrapped}
\setlength{\tabcolsep}{2pt} % tighter column spacing (was 5pt)
\renewcommand{\arraystretch}{1.0}
\begin{tabular}{@{} % remove outer padding
  >{\raggedright\arraybackslash}p{1.9cm} % Dataset (slightly narrower)
  c c                                   % K cand., epsilon
  r r                                   % ASW cols
  r                                     % UB col
  c c                                   % Best K cols
  c c                                   % worst-case rel. error cols
  c c                                   % Early stop cols
@{}}
\toprule
\makecell[l]{Dataset} &
\makecell[c]{$K$\\cand.} &
\makecell[c]{$\epsilon$} &
\makecell[r]{Best ASW\\($K$-means)} &
\makecell[r]{Best ASW\\($K$-medoids)} &
\makecell[r]{UB} &
\makecell[c]{Best $K$\\($K$-means)} &
\makecell[c]{Best $K$\\($K$-medoids)} &
\makecell[c]{Worst-case\\relative error\\($K$-means)} &
\makecell[c]{Worst-case\\relative error\\($K$-medoids)} &
\makecell[c]{Early stop\\($K$-means)} &
\makecell[c]{Early stop\\($K$-medoids)}
\\
\midrule
gene & $[2,15]$ & 0.15 & 0.228 & \textbf{0.230$^\dagger$} & 0.419 & 6 & \textbf{8} & 0.46 & 0.45 & false & false \\
ceramic & $[2,30]$ & 0.35 & \textbf{0.584} & \textbf{0.584$^\dagger$} & 0.855 & \textbf{2} & \textbf{2} & 0.32 & 0.32 & \textbf{true} & \textbf{true} \\
relig. texts & $[2,15]$ & 0.15 & 0.428 & \textbf{0.846$^\dagger$} & 0.846 & 3 & \textbf{2} & 0.49 & 0.0 & false & \textbf{true} \\
conf. papers & $[2,15]$ & 0.15 & 0.067 & \textbf{0.384} & 0.439 & 2 & \textbf{2} & 0.85 & 0.13 & false & \textbf{true} \\
wdbc & $[2,15]$ & 0.15 & 0.345 & \textbf{0.661$^\dagger$} & 0.708 & 2 & \textbf{2} & 0.51 & 0.07 & false & \textbf{true} \\
wisc & $[2,15]$ & 0.15 & \textbf{0.574} & \textbf{0.574$^\dagger$} & 0.844 & \textbf{2} & \textbf{2} & 0.32 & 0.32 & false & false \\
ecoli & $[2,15]$ & 0.15 & 0.398 & \textbf{0.836$^\dagger$} & 0.854 & 4 & \textbf{2} & 0.53 & 0.02 & false & \textbf{true} \\
wine & $[2,30]$ & 0.35 & 0.285 & \textbf{0.295$^\dagger$} & 0.626 & 3 & \textbf{5} & 0.55 & 0.53 & false & false \\
iono & $[2,15]$ & 0.15 & 0.293 & \textbf{0.413$^\dagger$} & 0.691 & 6 & \textbf{2} & 0.58 & 0.40 & false & false \\
zoo & $[2,30]$ & 0.35 & \textbf{0.547} & \textbf{0.547$^\dagger$} & 0.839 & \textbf{30} & \textbf{22} & 0.35 & 0.35 & \textbf{true} & \textbf{true} \\
iris & $[2,30]$ & 0.35 & \textbf{0.580} & \textbf{0.580$^\dagger$} & 0.878 & \textbf{2} & \textbf{2} & 0.34 & 0.34 & \textbf{true} & \textbf{true} \\
segment & $[2,15]$ & 0.15 & 0.357 & \textbf{0.778} & 0.929 & 7 & \textbf{2} & 0.62 & 0.16 & false & false \\
dermatology & $[2,15]$ & 0.15 & 0.277 & \textbf{0.357$^\dagger$} & 0.609 & 3 & \textbf{2} & 0.55 & 0.42 & false & false \\
balance-scale & $[2,15]$ & 0.15 & \textbf{0.219} & 0.214$^\dagger$ & 0.737 & \textbf{8} & 8 & 0.70 & 0.71 & false & false \\
heart-statlog & $[2,30]$ & 0.35 & 0.169 & \textbf{0.322$^\dagger$} & 0.593 & 3 & \textbf{2} & 0.72 & 0.46 & false & false \\
thy & $[2,30]$ & 0.35 & 0.577 & \textbf{0.734$^\dagger$} & 0.835 & 2 & \textbf{2} & 0.31 & 0.12 & \textbf{true} & \textbf{true} \\
\bottomrule
\end{tabular}
\end{table*}

\subsection{Benchmarking algorithms}

To cluster the datasets, we employ three popular methods: $K$-means, $K$-medoids, and Hierarchical agglomerative clustering. These methods serve as baselines to contextualize the performance of the proposed upper bound.

Because our goal is to obtain the highest possible $\mathrm{ASW}$, we evaluate multiple values of $K$ and report the one that yields the largest $\mathrm{ASW}$.

\paragraph{$K$-means clustering}
We use the implementation of $K$-means from \texttt{scikit-learn}. For each dataset, the algorithm requires the number of clusters $K$. 

\paragraph{$K$-medoids clustering}
We use the Python package \texttt{$K$-medoids} \citep{Schubert2022} and, in particular, two medoid-based algorithms: PAMSIL and PAMMEDSIL, both introduced in \citep{pamsil}. PAMSIL aims to directly optimize $\mathrm{ASW}$ but is computationally expensive. PAMMEDSIL is a faster proxy that optimizes a medoid-based version of the silhouette; while expedient, it generally results in a lower $\mathrm{ASW}$ than PAMSIL. To balance accuracy and run time, we use PAMSIL for datasets with $n<1000$ and switch to PAMMEDSIL for $n\ge 1000$. This threshold was chosen empirically to keep computation tractable while preserving $\mathrm{ASW}$ quality.

\paragraph{Hierarchical agglomerative clustering (HAC)}
For hierarchical clustering, we use the \texttt{SciPy} implementation \citep{scipy}. To measure inter-cluster distances when forming the hierarchy, we consider both \texttt{single} and \texttt{weighted} linkage. 

\begin{figure*}[!htb]
%\vspace{-2.5cm}
\centering
\includegraphics[width=\linewidth]{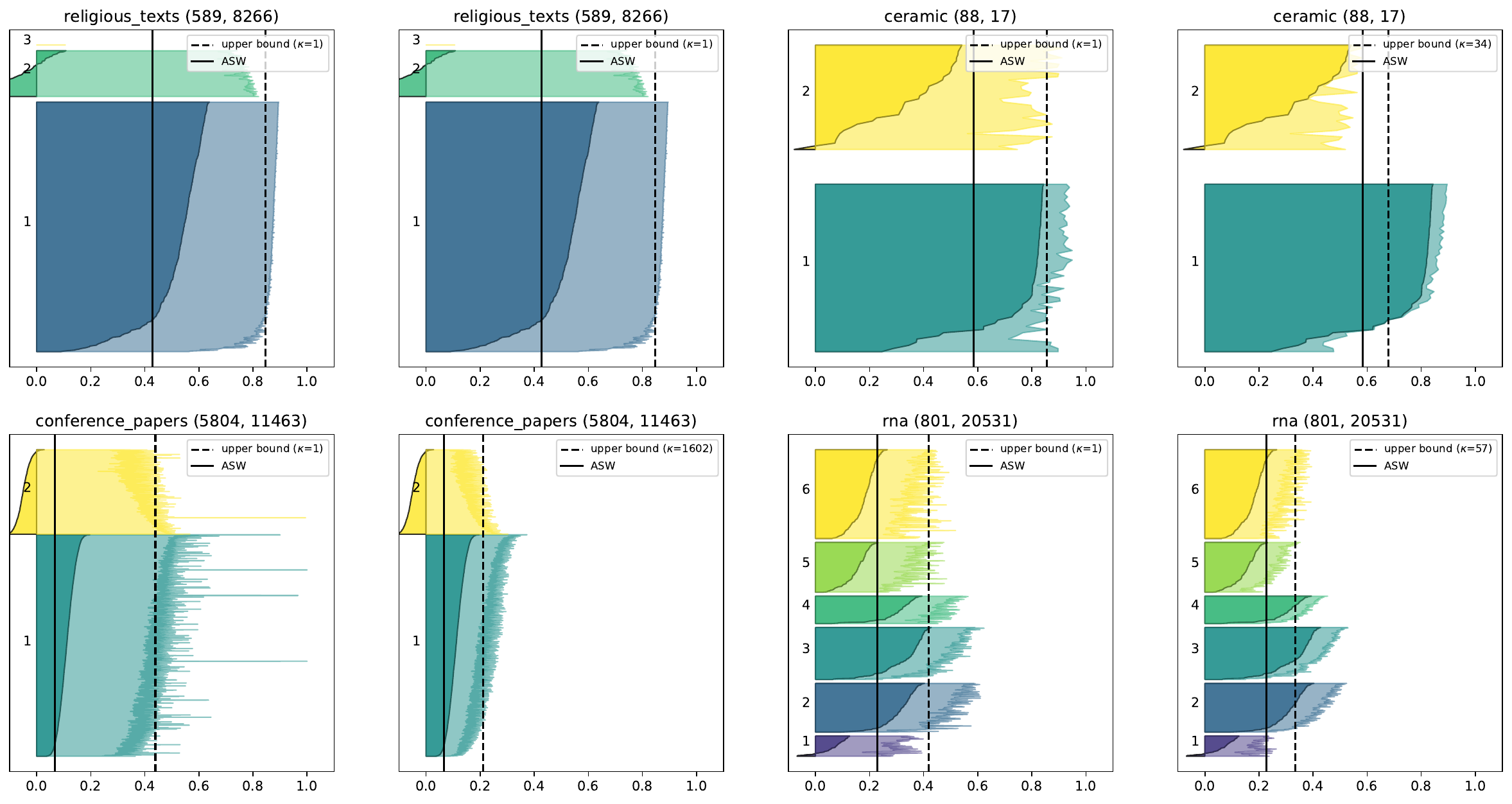}
\caption{Silhouette plots for the 4 real unlabeled datasets. $K$-means clusters; ASW compared against $\texttt{UB}(\Delta)$ and the size-constrained bound $\texttt{UB}_\kappa(\Delta)$, with $\kappa$ being the smallest cluster size.}

\label{fig:silh_grid_unlabeled}
\end{figure*}

% \vspace{1cm}

\begin{figure*}[!htb]
%\vspace{-0.3cm}
\centering
\includegraphics[width=\linewidth]{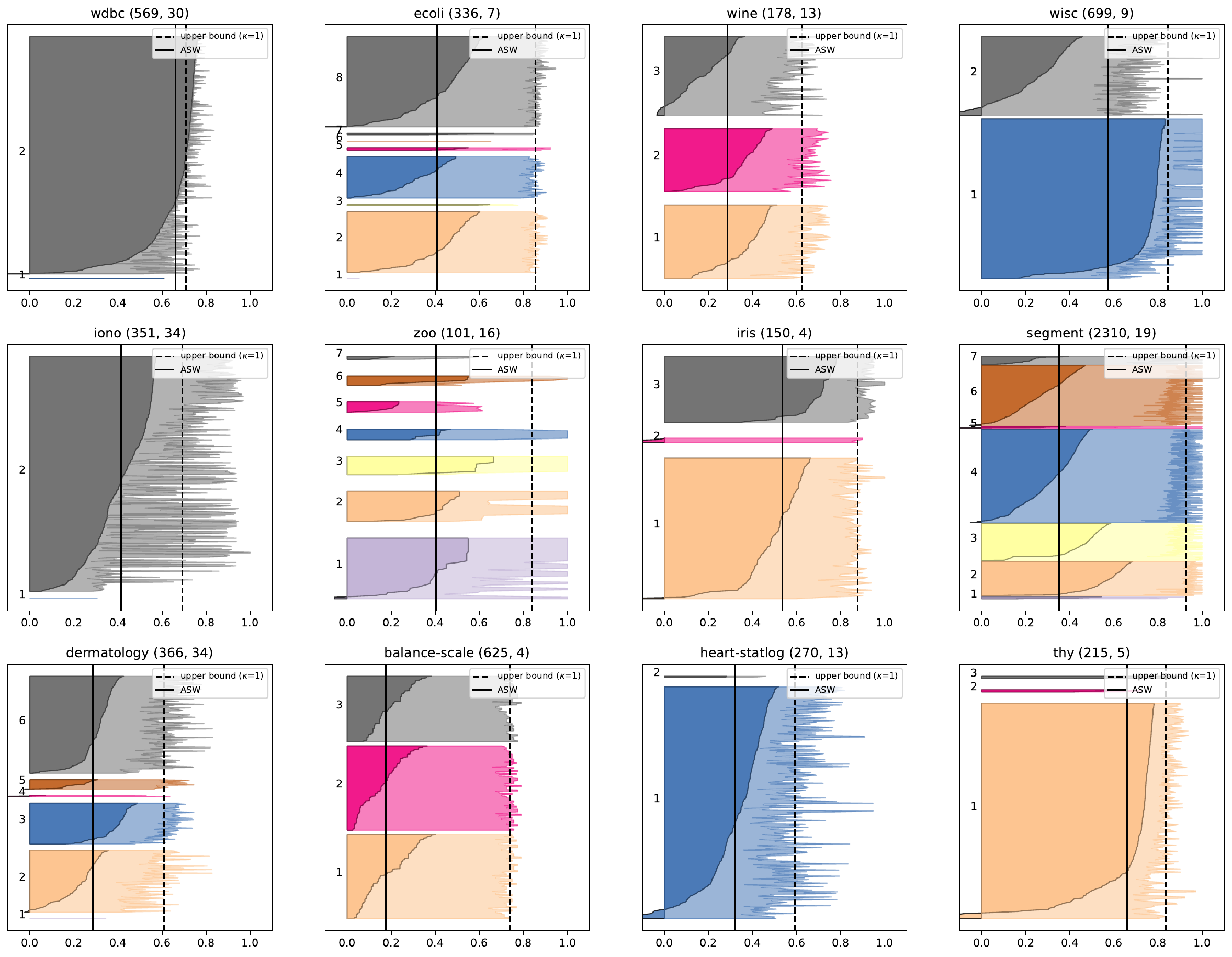}
\caption{Silhouette plots for the 12 real labeled datasets. \(K\)-medoids with \(K\) fixed to the ground truth; ASW compared against \(\texttt{UB}(\Delta)\).}

\label{fig:silh_grid_gt}
\end{figure*}

\begin{figure*}[!htb]
%\vspace{-2cm}
\centering
\includegraphics[width=\linewidth]{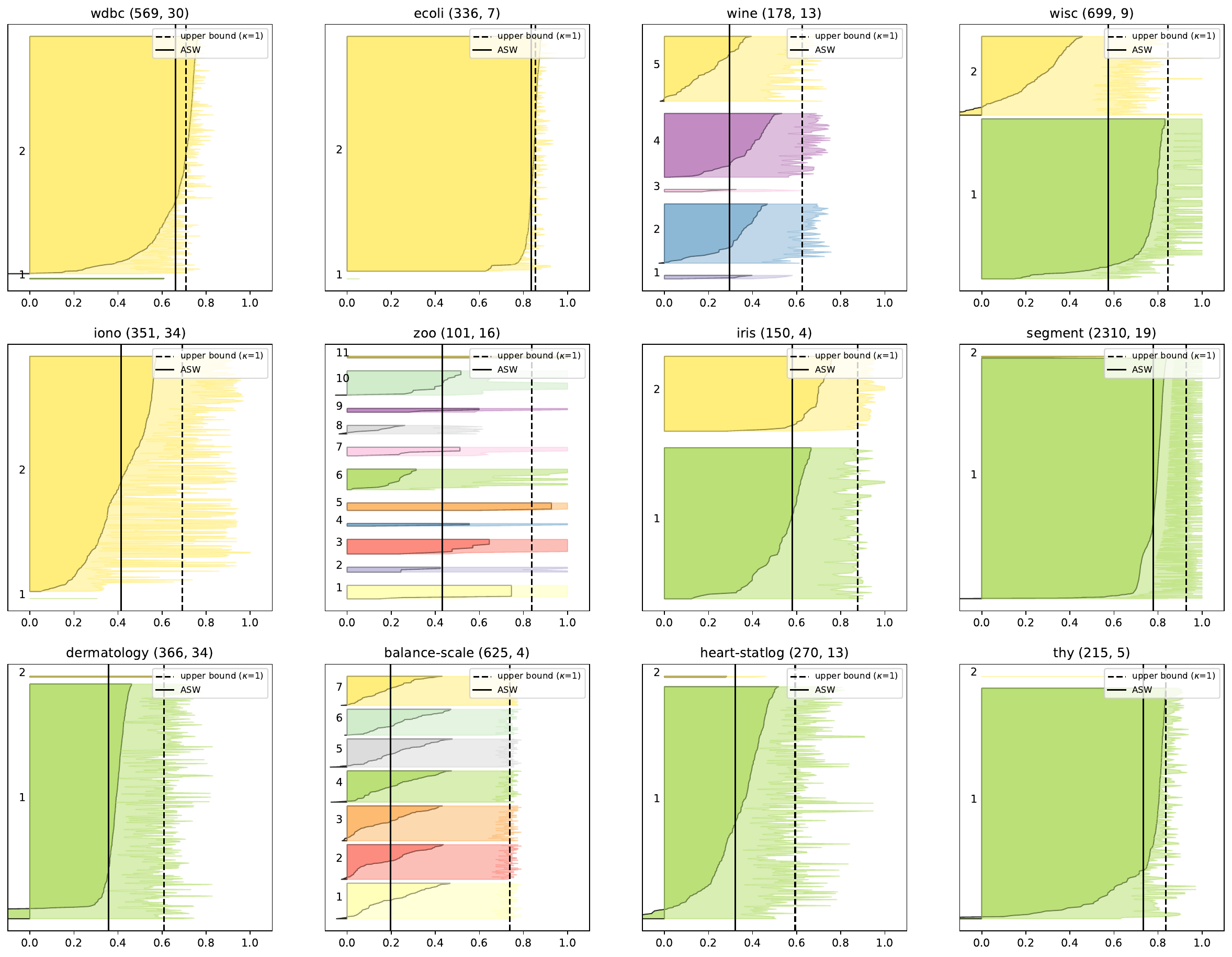}
\caption{Silhouette plots for the 12 real labeled datasets. \(K\)-medoids with \(K\) chosen by unrestricted optimization; ASW compared with \(\texttt{UB}(\Delta)\), illustrating the best empirical quality without fixing \(K\) to ground truth.}
\label{fig:silh_grid}
\end{figure*}

\begin{figure*}[!htb]
%\vspace{-1.7cm}
\centering
\includegraphics[width=\linewidth]{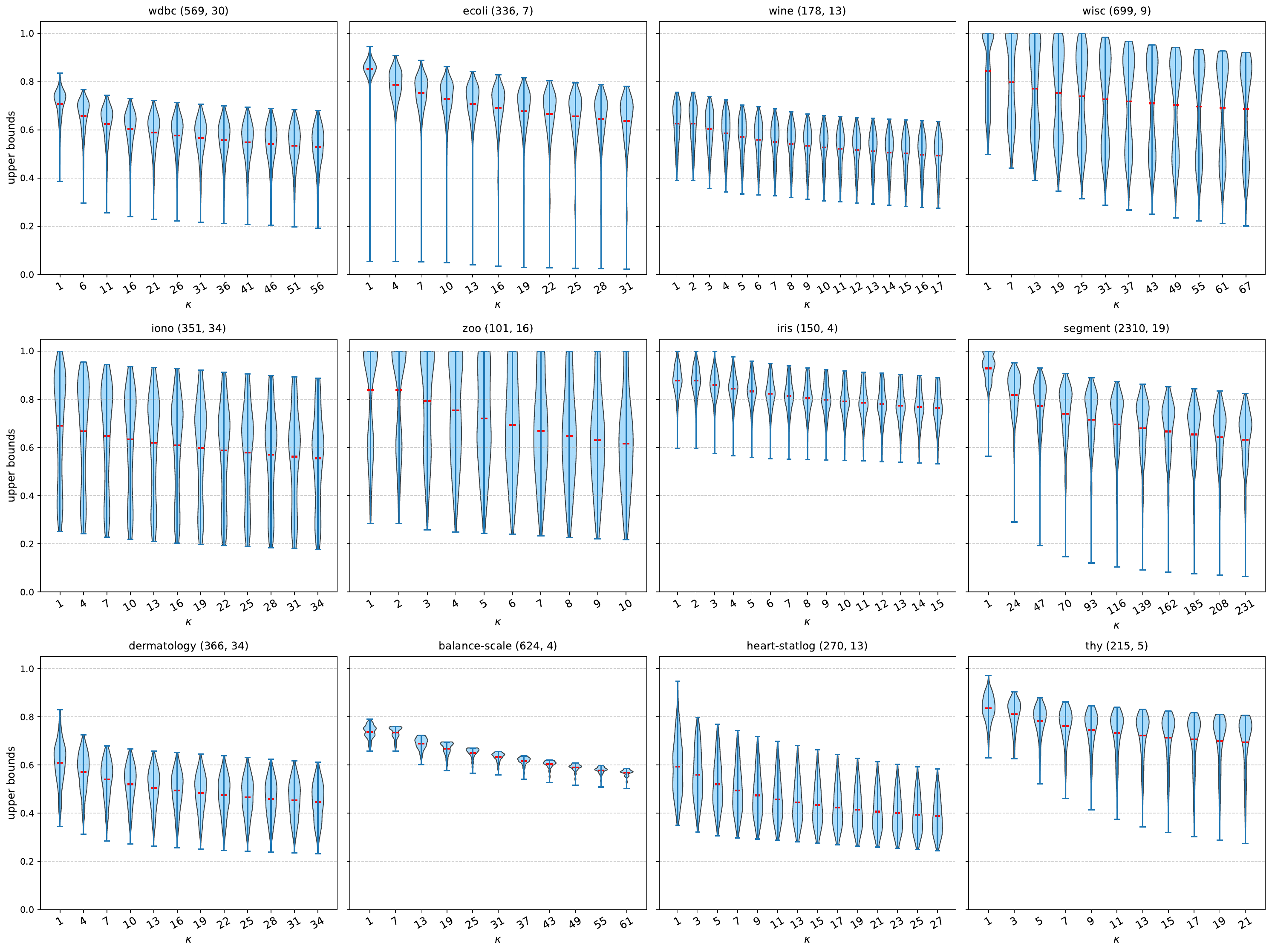}
\caption{Unlabeled datasets: violin plots of pointwise silhouette upper bounds versus minimum cluster size \(\kappa\) (up to \(\sim10\%\) of \(n\)). The violins summarize distributions across data points (means, i.e. $\mathrm{ASW}$ upper bounds, in red). The upper bounds decline as \(\kappa\) increases, showing that size constraints lead to sharper limits.}

\label{fig:box_grid}
\end{figure*}

\subsection{Results and discussion}

Fig.~\ref{fig:silh_plot} presents the $\mathrm{ASW}$ obtained by different clustering algorithms on the synthetic dataset \texttt{400-64-5-6}. K-means achieves its best result with five clusters of equal size ($80$ objects each), reaching an $\mathrm{ASW}$ of $0.249$. Hierarchical clustering with weighted linkage also favors five clusters, whereas single linkage attains its maximum $\mathrm{ASW}$ ($0.209$) at four clusters. K-medoids reaches the same output as K-means and weighted linkage. The theoretical upper bound without restrictions on cluster size ($\kappa=1$) is $0.376$ (black dashed line). When enforcing a minimum cluster size of $\kappa=80$, Algo.~\ref{algo:upper_bound} yields a reduced upper bound of $0.282$ (red dashed line). Thus, among solutions with no cluster smaller than $80$ elements, the five-cluster partition found by e.g. K-means is guaranteed to be within $12\%$ of the global optimum.

Tab.~\ref{tab:table1} reports the remaining results on the synthetic datasets. For each algorithm, we show the best observed $\mathrm{ASW}$, with the selected number of clusters in parentheses. The last three columns summarize the upper bounds from Algo.~\ref{algo:upper_bound}: $\mathrm{UB}(\Delta)$ ($\mathrm{ASW}$ bound), $\texttt{minUB}(\Delta)$ (smallest individual bound), and $\texttt{maxUB}(\Delta)$ (largest individual bound). For dataset \texttt{400-64-2-2}, the upper bound matches the achieved $\mathrm{ASW}$ for all algorithms, certifying global optimality. Dataset \texttt{1000-161-2-13} has a significantly low upper bound ($\mathrm{UB}(\Delta)=0.182$), implying that no clustering can exceed an $\mathrm{ASW}$ of 0.182; in such cases, alternative quality measures may be more appropriate, or the data may be effectively non-clusterable.

Tab.~\ref{tab:table2} reports results on the real unlabeled datasets and mirrors the layout of Tab.~\ref{tab:table1}. By default, we search over $K\in\{2,\ldots,n\}$, where $n$ is the number of samples. When necessary for reasonable run time, however, we limit the search at $K\le 30$. Entries marked N/A indicate that $K$-means is incompatible with the metric, e.g., correlation, cosine or Jaccard.
For Euclidean ``Religious texts'', Euclidean ``Conference papers'', and Jaccard ``Conference papers'', the hierarchical methods are within $0.021$ of the upper bound $\mathrm{UB}(\Delta)$—so any further improvement in ASW over the reported values is at most $0.021$. Several datasets have small $\texttt{minUB}(\Delta)$ (e.g., Cosine ``Religious texts'' with $\texttt{minUB}(\Delta)=0.079$), certifying that at least one observation cannot achieve a silhouette width exceeding that value under any clustering. In general, when $\texttt{minUB}(\Delta)$ is near zero, it may be reasonable to treat the corresponding observations as noise or remove them from the analysis.

% ---- Text for early-stopping experiment ----
We implement the early-stopping algorithm presented in Sec. ~\ref{subsec:select_optimal_k} and examine empirically how it performs on the real datasets. In Tab.~\ref{tab:alg2_early_stopping_excel_ub3_wrapped}, we adopt the following convention: For datasets with $n \leq 300$, we explore $K \in \{2, \dots, 30\}$ with tolerance $\epsilon = 0.35$. For datasets with $n > 300$, we restrict the search to $K \in \{2, \dots, 15\}$ with tolerance $\epsilon = 0.15$. These parameter choices are not canonical, but rather serve to illustrate different practical regimes. The goal is to demonstrate how Algo.~\ref{algo:early_stopping} behaves under varying $K$ candidates and tolerance, rather than to prescribe a universally optimal setting.

The results indicate that the proposed upper bound can indeed shorten silhouette-based optimization loops in practice under reasonable tolerance choices. In several cases — notably ceramic, religious-texts, conference-papers, wdbc, ecoli, zoo, iris, and thy — the algorithm halted early while still returning an ASW within the specified tolerance. In other cases, the bound was not tight enough compared to the observed output to trigger early stopping, and the procedure effectively reverted to a full search. This mixed outcome highlights both the promise and the limitations of the approach: when the bound interacts favorably with the empirical ASW, it can reduce computation substantially, but there is no guarantee of savings in every scenario.
% -----------------------------------------

% Grid figures
Fig.~\ref{fig:silh_grid_unlabeled} visualizes for the real unlabeled datasets the individual upper bounds in the context of the clusterings obtained by K-means (see Tab.~\ref{tab:table2}). It also shows how the upper bound is tightened under size constraints. Fig.~\ref{fig:silh_grid_gt} and Fig.~\ref{fig:silh_grid} are constructed in the same way, but for the labeled datasets, without size constraints and with K-medoids instead of K-means. The difference between the two of them is that the latter implements a scan of different $K$ values to maximize the empirical ASW whereas the former uses the ground truth labels to stipulate $K$. Fig.~\ref{fig:box_grid} also corresponds to the labeled data and illustrates a more rigorous analysis of the distribution of the individual upper bounds in relation to the $\kappa$ parameter of Algo.~\ref{algo:upper_bound}. 

We conclude the discussion by highlighting three key roles of the proposed upper bound.

\begin{enumerate}
\item \textbf{Global upper bound.} The unrestricted bound $\mathrm{UB}(\Delta)$ is a ceiling on any achievable ASW for the dataset. It can shorten silhouette-based search loops by enabling early stopping; when tight, it certifies that a globally silhouette-optimal clustering has been found. Even when not tight, it provides essential context for interpreting reported ASW values across algorithms and initializations. Moreover, the magnitude of $\mathrm{UB}(\Delta)$ itself can be seen as a proxy for the \emph{clusterability} of the dataset: a low upper bound suggests that no clustering can yield well-separated and compact groups, whereas a high upper bound indicates the potential for meaningful structure.
\item \textbf{Restricted upper bound}. Analysts often care about a constrained solution space. By enforcing a minimum cluster size via $\kappa$, the sharper bound $\mathrm{UB}_\kappa(\Delta)$ reflects application constraints and yields insights that are more relevant.
\item \textbf{Individual upper bounds.} Our proposed pointwise upper bounds quantify the maximum silhouette each observation can attain. Very low values flag observations that cannot be well integrated under any clustering; such points may merit down-weighting, manual inspection, or exclusion as outliers or noise.
\end{enumerate}

% -----------------------------------------------------------------------------------
\section{Conclusion}\label{sec:conclusion}

In this paper, we propose a dataset–dependent upper bound on the average silhouette width $\mathrm{ASW}$ as a new tool for analyzing clustering quality. While $\mathrm{ASW}$ is conventionally interpreted relative to its standard range $[-1,1]$, our results show that a much tighter, data-dependent upper bound can be computed in $\mcO(n^2 \log n)$ time. This upper bound provides a more meaningful reference value: it often lies well below $1$, reflecting the inherent limitations of the dataset itself.

Empirical studies on both synthetic and real datasets confirm that the proposed upper bound is often close-to-sharp and yields valuable insights into the structure and limitations of the data. At the same time, we emphasize that the bound is not guaranteed to be sharp in general; the true global ASW-maximum may fall considerably below the bound. For this reason, the bound should be viewed not as a precise target but as an informative ceiling that frames observed ASW scores in a broader context. Even when loose, the bound can shorten optimization loops, offer a first-order estimate of the datasets clusterability, and highlight points that are inherently difficult to cluster.

From a broader perspective, our main contribution is to establish a proof of concept: that efficiently computable, data-dependent upper bounds on ASW can enrich cluster analysis. We believe this idea opens a promising line of work, where future research should aim at deriving tighter bounds, extending the approach to other clustering validity indices, and exploring additional constraints motivated by application domains. A particularly important direction is to develop theoretical guarantees, for example bounding how far the proposed upper bound can deviate from the true global maximum under certain assumptions on the dataset. This goal could be attained through the derivation of a theoretical and close-to-sharp lower bound on $\mathrm{ASW}(\mcC^*,\Delta)$.

% ------------------------------------------------
\section*{Declaration of competing interest}
The authors declare that they have no known competing financial interests or personal relationships that could have appeared to influence the work reported in this paper.

\section*{CRediT authorship contribution statement}
Hugo Sträng: conceptualization, data curation, methodology, experiment, visualization, writing, editing; Tai Dinh: visualization, writing, editing, validation.

\section*{Acknowledgments}
The authors thank Hugo Fernström for his careful proofreading and for offering insightful feedback on a previous version. His constant encouragement greatly contributed to the development of this work.

\bibliographystyle{model5-names}
\biboptions{authoryear}
\bibliography{references}
% \nocite*
\end{document}